\documentclass[conference]{IEEEtran}
\IEEEoverridecommandlockouts

\usepackage{cite}
\usepackage{amsmath,amssymb,amsfonts}
\usepackage{algorithmic}
\usepackage{graphicx}
\usepackage{textcomp}
\usepackage{xcolor}
\usepackage{multirow}
\usepackage{booktabs}
\usepackage{tabularx}
\usepackage{colortbl}
\usepackage[ruled,linesnumbered,vlined]{algorithm2e}
\usepackage{makecell}
\SetKwInput{KwIn}{Input}
\SetKwInput{KwOut}{Output}
\SetKwComment{tcp}{$\triangleright$~}{}
\SetCommentSty{textit}
\usepackage{caption}
\usepackage[breakable]{tcolorbox}
\definecolor{darkpurple}{HTML}{1F3864} 
\definecolor{leakred}{RGB}{153,0,0} 
\usepackage{fontawesome5}
\usepackage[
    colorlinks=true,
    linkcolor=darkpurple,
    citecolor=darkpurple,
    urlcolor=darkpurple
]{hyperref}
\usepackage[normalem]{ulem}
\definecolor{pos}{RGB}{0,128,0}
\definecolor{neg}{RGB}{178,34,34}
\definecolor{best}{HTML}{C9E0F5}
\definecolor{second}{HTML}{E8F4FC}
\definecolor{rowgray}{gray}{0.92}
\newcommand{\up}{\textcolor{red}{$\uparrow$}}
\usepackage{amsthm}
\newtheorem{theorem}{Theorem}
\newtheorem{remark}{Remark}

\providecommand{\Description}[1]{}

\def\BibTeX{{\rm B\kern-.05em{\sc i\kern-.025em b}\kern-.08em
    T\kern-.1667em\lower.7ex\hbox{E}\kern-.125emX}}

\begin{document}

\title{GRACE: LLM-Grounded Semantic Metric Spaces for Scalable Mixed-Data Clustering}

\author{
Zihua Yang$^{1}$, Zhencheng Xie$^{1}$, Junyang Chen$^{2}$, Liang Xie$^{3}$, Yiqun Zhang$^{1}$, Mengke Li$^{4}$, Yang Lu$^{5}$
\\
$^{1}$\textit{Guangdong University of Technology}, 
$^{2}$\textit{Tsinghua University}\\
$^{3}$\textit{Peking University},
$^{4}$\textit{Shenzhen University},
$^{5}$\textit{Xiamen University}\\
\{yangzihua1, xiezhencheng\}@mails.gdut.edu.cn, chenjuny25@mails.tsinghua.edu.cn, \\liangxie@pku.edu.cn, yqzhang@gdut.edu.cn, 
mengkeli@szu.edu.cn, luyang@xmu.edu.cn
}

\maketitle

\begin{abstract}
Clustering mixed tabular data requires a unified metric space to bridge the inherent heterogeneity between continuous numerical measurements and discrete categorical symbols. Traditionally, algorithms rely entirely on dataset-internal statistics to estimate categorical relationships, which confines the learned metric to empirical co-occurrences and ignores conceptually obvious yet statistically unobserved affinities. Although LLMs offer external world knowledge, applying their text-centric reasoning to highly abstract tabular concepts presents significant challenges. Bridging this modality gap to construct a semantically complete metric typically requires embedding LLMs into iterative metric learning loops to dynamically optimize cross-modality representations. This incurs intractable computational overhead, forcing a compromise between semantic enrichment and scalability. Therefore, we propose GRACE, an LLM-grounded framework for scalable mixed-data clustering. GRACE shifts semantic acquisition to the attribute-value level via a multi-perspective LLM querying strategy, mapping heterogeneous values into knowledge-informed descriptions. Crucially, this one-shot grounding extracts general-purpose semantic representations that embed heterogeneous attributes into a unified space, decoupling expensive LLM invocation from iterative optimization. Furthermore, GRACE cross-validates these external semantics against dataset-internal statistical evidence to ensure alignment with the dataset-specific cluster structure. Ultimately, GRACE matches the scalability of conventional statistics-driven baselines while achieving superior clustering accuracy and conceptual interpretability over 11 competing methods. The source code is available at \href{https://github.com/develop-yang/GRACE-GRACE-A}{\faGithub~GRACE}.
\end{abstract}

\begin{IEEEkeywords}
Heterogeneous attribute data, scalable similarity measurement, large language models, semantic metric space, representation learning
\end{IEEEkeywords}

\section{Introduction}

Tabular data with heterogeneous numerical and categorical attributes is ubiquitous in healthcare, finance, and e-commerce~\cite{pu2025leveraging,shields2025designer}. In these domains, clustering supports label-free organization of large-scale records for exploratory analysis, cohort discovery, and data-driven decision-making. Clustering such mixed data requires measuring pairwise similarity across all attributes~\cite{spurio2025hierarchical,shi2025speheatal}. For numerical attributes, value differences provide a readily defined distance, but categorical attributes take values from finite unordered sets and carry no intrinsic distance~\cite{kim2025predict}. Whether \textit{nurse} is closer to \textit{doctor} or to \textit{engineer}, for example, cannot be judged from their symbolic forms alone. Categorical distance is therefore a key component of mixed-data similarity without a natural definition, and how well it is constructed governs clustering performance~\cite{juntong2025tabdiff,andreas2024text}. 

To estimate categorical distances, one line of research defines distance functions directly in the original feature space, using statistical signals of varying granularity, from attribute-wise entropy weighting and cross-attribute dependency modeling~\cite{zhu2018heterogeneous,zhu2020unsupervised} to higher-order value interaction graphs~\cite{zhang2022graph}. These methods recover relationships that symbolic forms alone cannot reveal, but only when such relationships are reflected by co-occurrence, dependency, or interaction patterns in the observed data. Representation learning, by contrast, maps heterogeneous attributes into a shared continuous space where standard distance metrics, such as Euclidean or cosine distance, can be applied~\cite{zhang2025learning,chen2024qgrl}, enabling unified treatment of numerical and categorical attributes. Such embeddings capture dependencies that feature-space metrics cannot express~\cite{jonathan2024interpretable}. However, their learning signal remains derived from dataset-internal statistics. As a result, the learned representations mainly reorganize empirical regularities already present in the observed data~\cite{guillaume2025mixed}. Taken together, these methods share a common limitation: categorical relationships are inferred only from evidence internal to the dataset. This reliance on internal statistics is restrictive because categorical values often denote real-world concepts with readily available external semantics, and ignoring such prior knowledge can leave obvious conceptual affinities invisible to the learned metric. This motivates the use of external semantic knowledge for mixed-data clustering.

To meet the demand for semantic knowledge beyond the dataset, the recent emergence of LLMs offers a natural source, as these models intrinsically encode world knowledge about real-world concepts and their relationships~\cite{zhang2025trending,du2025information}. Much like humans infer the closeness between occupations from domain knowledge rather than from their bare names, grounding attribute values with LLM knowledge lifts isolated symbols into a semantic metric space where conceptual affinities become geometrically structured. Sparse symbol encoding derives distances from observed data patterns, leaving the induced metric space semantically incomplete. As a result, conceptual proximity that is evident from world knowledge may not be faithfully translated into geometric proximity. Once each value is expanded into descriptive text grounded in LLM world knowledge, the external semantics reshape the metric space, making domain-level associations explicit and better aligned with the underlying clustering structure.

The remaining challenge is how to turn this external semantic prior into a clustering-ready metric space for mixed data. Existing LLM-based clustering mainly exploits LLMs in their native language space, where each instance is already a sentence or document and the model can directly reason about semantic similarity~\cite{zhang2023clusterllm,viswanathan2024large}. This text-centric paradigm does not transfer naturally to mixed tabular data. Since categorical values are isolated but semantically condensed concepts, whereas numerical values are computable measurements with their own metric structure, both are expected to be represented within a unified similarity space. In principle, the semantic compression of categorical values makes LLM guidance especially valuable for metric construction. However, repeatedly invoking an LLM during semantic grounding, neighborhood construction, or cluster refinement would introduce substantial latency and computational overhead, especially when pairwise or neighborhood structures scale with the data. Therefore, the central obstacle to applying LLM knowledge to mixed-data clustering is the efficiency bottleneck of obtaining semantic guidance at scale. This calls for a new paradigm that acquires LLM-grounded semantic representations without repeated invocation, while still aligning the resulting space with the dataset-specific clustering structure.

Guided by this observation, this paper proposes GRACE (\textbf{GR}ounding \textbf{A}ttributes for \textbf{C}lustering via \textbf{E}xternal semantics), a framework that makes LLM world knowledge usable for mixed-data clustering without embedding the LLM in iterative metric learning or cluster refinement. The key idea is to shift semantic acquisition from the instance or pair level to the attribute-value level. Rather than repeatedly querying the LLM over samples, neighborhoods, or intermediate cluster states, GRACE grounds each possible attribute value once: categorical symbols are expanded into knowledge-grounded descriptions, and numerical domains are partitioned into domain-informed intervals described in the same semantic form. This value-level design decouples LLM knowledge acquisition from clustering-time computation and turns heterogeneous values into comparable semantic units for a unified metric space.
External semantics alone, however, do not determine the cluster structure of a specific dataset. A semantic affinity may be conceptually plausible yet irrelevant to the empirical grouping induced by the observed records. GRACE therefore complements LLM-grounded semantics with dataset-internal statistical evidence. The semantic space provides external conceptual knowledge, and the statistical view calibrates it by identifying neighborhood relations supported by the original features. Relations confirmed by both views are selectively enhanced, yielding an affinity structure that preserves world-knowledge-aware similarity and remains aligned with dataset-specific clustering structure. Finally, eigengap-guided graph sparsification converts the refined affinity matrix into a spectrally well-conditioned graph for clustering.
The main contributions of this paper are summarized below:
\begin{itemize}
    \item \textbf{Bridging LLMs and mixed data clustering.} This work is the first to bring external semantic knowledge into effective complementarity with dataset-internal statistical evidence for mixed-data clustering, constructing a unified world-knowledge-aware metric space that captures both conceptual affinities and distributional structures.
    \item \textbf{Aligning semantic knowledge with cluster structure.} GRACE resolves the misalignment between semantic proximity and cluster membership through dual-view neighborhood consistency, providing a principled mechanism for coupling external semantics with dataset-specific statistical evidence.
     \item \textbf{Decoupling LLM grounding from metric learning.} GRACE reformulates LLM-assisted mixed-data metric learning as a one-shot value-level grounding process followed by LLM-free metric adaptation. Since the LLM is invoked only over possible attribute values rather than instances, the semantic acquisition cost scales linearly with value cardinality, providing a low-token-overhead and scalable route to world-knowledge-aware clustering.
\end{itemize}

\section{Related Work}
\subsection{Distance Measures for Mixed Data}
Categorical values carry no inherent distance, and the earliest measures fill that void by counting matches, treating every mismatch as equally far.
Such matching yields a workable metric, yet it flattens the unequal relatedness that real concepts exhibit.
To recover that relatedness, a long line of work mines it from the data, advancing through co-occurrence context~\cite{ienco2012context}, entropy weighting, cross-attribute couplings~\cite{zhu2018heterogeneous,zhu2020unsupervised}, learnable intra-attribute weights, higher-order value graphs~\cite{zhang2022graph}, and dense embeddings of the same couplings~\cite{jian2017embedding,jian2018cure}.
Each step reads the statistics more finely than the last, yet every one inherits the same blind spot, namely, any affinity that leaves no trace in the observed sample cannot be recovered.

Mixed data, where numerical and categorical attributes coexist, raise a second problem, that of placing continuous measurements and discrete symbols on one scale.
Early schemes settle it by hand, weighting a Euclidean term against a matching one (k-prototypes~\cite{huang1998extensions}, Gower~\cite{gower1971general}), and later ones learn a shared space that absorbs both types (unified distance learning~\cite{zhang2025learning}, quaternion graphs~\cite{chen2024qgrl}).
The joint metric grows more principled with each design, yet the cure for the scale problem leaves the earlier blind spot untouched, since meaning is still drawn entirely from within the dataset.
Two values that never co-occur are therefore judged far apart even when domain knowledge would call them close, a verdict no statistical refinement can overturn.

\begin{figure*}[t]
    \centering
    \includegraphics[width=\textwidth]{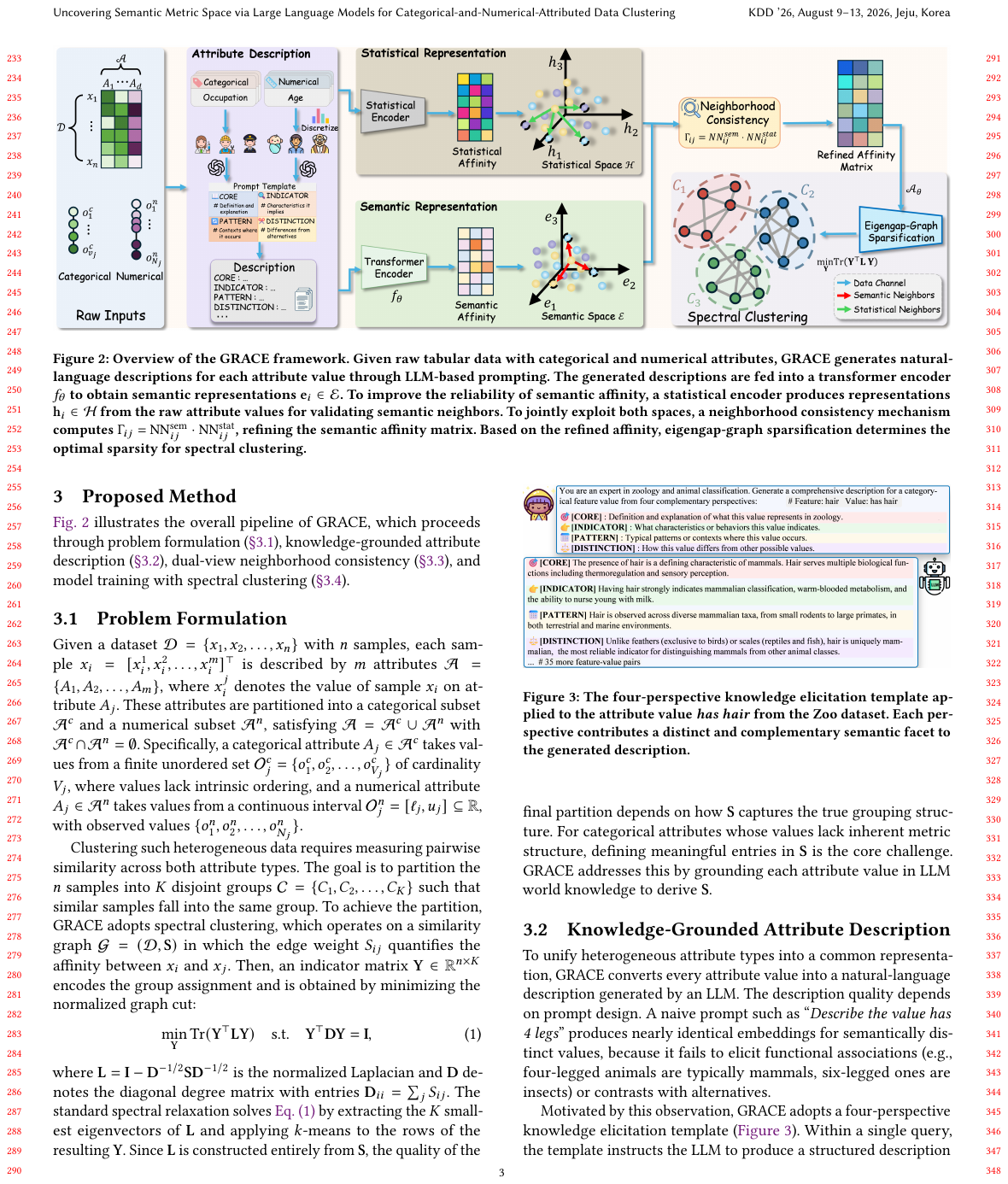}
    \Description{Overview of the GRACE framework.}
    \caption{Overview of the GRACE framework. Given raw tabular data with categorical and numerical attributes, GRACE generates natural-language descriptions for each attribute value through LLM-based prompting. The generated descriptions are fed into a transformer encoder $f_\theta$ to obtain semantic representations $\mathbf{e}_i \in \mathcal{E}$. To improve the reliability of semantic affinity, a statistical encoder produces representations $\mathbf{h}_i \in \mathcal{H}$ from the raw attribute values for validating semantic neighbors. To jointly exploit both spaces, a neighborhood consistency mechanism computes $\Gamma_{ij} = \mathrm{NN}_{ij}^{\mathrm{sem}} \cdot \mathrm{NN}_{ij}^{\mathrm{stat}}$, refining the semantic affinity matrix. Based on the refined affinity, eigengap-graph sparsification determines the optimal sparsity for spectral clustering.}

    \label{fig:framework}
\end{figure*}

\subsection{LLM-Enhanced Clustering}

LLMs encode world knowledge that statistical distances alone cannot recover, and recent clustering research draws on it. These efforts, however, remain confined to inherently textual input. ClusterLLM~\cite{zhang2023clusterllm} elicits triplet judgments of relative proximity, few-shot prompting~\cite{viswanathan2024large} converts limited supervision into pseudo-labels or pairwise constraints, loop-based refinement~\cite{an2024generalized} corrects assignments from model feedback, and contrastive formulations such as SCCL~\cite{zhang2021supporting} and TAC~\cite{li2024tac} incorporate the language signal into the clustering objective. Tabular data, by contrast, offers no such linguistic surface, as its categorical entries are discrete symbols and its numerical entries scalar quantities. A knowledge-grounded distance over such heterogeneous attributes therefore remains an open question.

Applying language models to tabular data is itself well studied, though existing work targets prediction or generation instead of a clustering metric. One line serializes each record into a sentence, as in TabLLM~\cite{hegselmann2023tabllm} for classification and GReaT~\cite{borisov2023great} for generation, and another recasts column semantics as features for a downstream learner, as in CAAFE~\cite{hollmann2023large} and FeatLLM~\cite{han2024large}. Both lines serve supervised objectives and therefore provide no unsupervised distance over mixed attributes. The effort closest to such a distance is BREVE~\cite{yang2026bridging}, which grounds categorical values in model knowledge to derive a similarity measure. Its fusion is nonetheless static, embedding raw value identities into a shared space under a fixed, non-trainable weight, and cannot separate genuine from spurious semantic affinities. It is moreover restricted to categorical attributes, excluding numerical ones.

\begin{table}[t]
\centering
\small
\caption{Frequently used symbols and notations.}
\label{tab:notation}
\begin{tabularx}{\columnwidth}{c|X}
\toprule
\textbf{Symbol} & \textbf{Description} \\
\midrule
\rowcolor{rowgray}
$\mathcal{D}$, $n$ & Dataset and the number of 
samples \\
$x_i$, $\mathcal{A}$, $m$ & Sample, attribute set, 
and attribute count \\
\rowcolor{rowgray}
$\mathcal{A}^c$, $\mathcal{A}^n$ & Categorical and 
numerical attribute subsets \\
$K$, $\mathcal{C}$ & Number of clusters and 
partition \\
\rowcolor{rowgray}
$t(j, o)$ & LLM-generated description for value $o$ 
on attribute $A_j$ \\
$f_\theta$, $d_e$ & Sentence encoder and embedding 
dimensionality \\
\rowcolor{rowgray}
$\mathbf{e}(j, o)$ & Attribute-level embedding for 
value $o$ on $A_j$ \\
$\mathbf{z}_i$, $\mathbf{h}_i$ & Semantic and 
statistical representations of sample $x_i$ \\
\rowcolor{rowgray}
$\mathbf{S}^{\mathrm{sem}}$, 
$\mathbf{S}^{\mathrm{stat}}$ & Semantic and 
statistical similarity matrices \\
$\lambda$ & Natural neighbor search radius \\
\rowcolor{rowgray}
$\mathbf{NN}^{\mathrm{sem}}$, 
$\mathbf{NN}^{\mathrm{stat}}$ & Natural neighbor 
graphs for two views \\
$\boldsymbol{\Gamma}$, $\tau_c$ & Consistency 
matrix and adaptive threshold \\
\rowcolor{rowgray}
$\rho_{il}$, $\alpha_{il}$ & Refined affinity and 
enhancement coefficient \\
$\theta^*$, $\mathbf{A}_{\theta^*}$ & Optimal 
sparsification threshold and adjacency matrix \\
\rowcolor{rowgray}
$\mathrm{REQ}$ & Relative Eigengap Quality \\
\bottomrule
\end{tabularx}
\end{table}

\section{Proposed Method}

GRACE follows a four-stage pipeline consisting of problem formulation, knowledge-grounded attribute description, dual-view neighborhood consistency, and model training with spectral clustering, as illustrated in Fig.~\ref{fig:framework}. For clarity, the frequently used symbols are summarized in Table~\ref{tab:notation}. The complexity of both GRACE and its scalable variant GRACE-A is analyzed in the complexity analysis subsection.

\subsection{Problem Formulation}
\label{sec:problem}

Given a dataset $\mathcal{D} = \{x_1, x_2, \ldots, x_n\}$ with $n$ samples, each sample $x_i = [x_i^1, x_i^2, \ldots, x_i^m]^\top$ is described by $m$ attributes $\mathcal{A} = \{A_1, A_2, \ldots, A_m\}$, where $x_i^j$ denotes the value of sample $x_i$ on attribute $A_j$. These attributes are partitioned into a categorical subset $\mathcal{A}^c$ and a numerical subset $\mathcal{A}^n$, satisfying $\mathcal{A} = \mathcal{A}^c \cup \mathcal{A}^n$ with $\mathcal{A}^c \cap \mathcal{A}^n = \emptyset$. Specifically, a categorical attribute $A_j \in \mathcal{A}^c$ takes values from a finite unordered set $\mathcal{O}^c_j = \{o^c_1, o^c_2, \ldots, o^c_{V_j}\}$ of cardinality $V_j$, where values lack intrinsic ordering, and a numerical attribute $A_j \in \mathcal{A}^n$ takes values from a continuous interval $\mathcal{O}^n_j = [\ell_j, u_j] \subseteq \mathbb{R}$, with observed values $\{o^n_1, o^n_2, \ldots, o^n_{N_j}\}$.

Clustering such heterogeneous data requires measuring pairwise similarity across both attribute types. The goal is to partition the $n$ samples into $K$ disjoint groups $\mathcal{C} = \{C_1, C_2, \ldots, C_K\}$ such that similar samples fall into the same group. To achieve the partition, GRACE adopts spectral clustering, which operates on a similarity graph $\mathcal{G} = (\mathcal{D}, \mathbf{S})$ in which the edge weight $S_{ij}$ quantifies the affinity between $x_i$ and $x_j$. Then, an indicator matrix $\mathbf{Y} \in \mathbb{R}^{n \times K}$ encodes the group assignment and is obtained by minimizing the normalized graph cut:
\begin{equation}
\min_{\mathbf{Y}} \operatorname{Tr}(\mathbf{Y}^\top \mathbf{L} \mathbf{Y}) \quad \text{s.t.} \quad \mathbf{Y}^\top \mathbf{D} \mathbf{Y} = \mathbf{I},
\label{eq:ncut}
\end{equation}
where $\mathbf{L} = \mathbf{I} - \mathbf{D}^{-1/2} \mathbf{S}\mathbf{D}^{-1/2}$ is the normalized Laplacian and $\mathbf{D}$ denotes the diagonal degree matrix with entries $\mathbf{D}_{ii} = \sum_j S_{ij}$. The standard spectral relaxation solves \hyperref[eq:ncut]{Eq.~\eqref{eq:ncut}} by extracting the $K$ smallest eigenvectors of $\mathbf{L}$ and applying $k$-means to the rows of the resulting $\mathbf{Y}$. Since $\mathbf{L}$ is constructed entirely from $\mathbf{S}$, the quality of the final partition depends on how $\mathbf{S}$ captures the true grouping structure. For categorical attributes whose values lack inherent metric structure, defining meaningful entries in $\mathbf{S}$ is the core challenge. GRACE addresses this by grounding each attribute value in LLM world knowledge to derive semantically grounded $\mathbf{S}$.

\begin{figure}[t]
    \centering
    \includegraphics[width=\columnwidth]{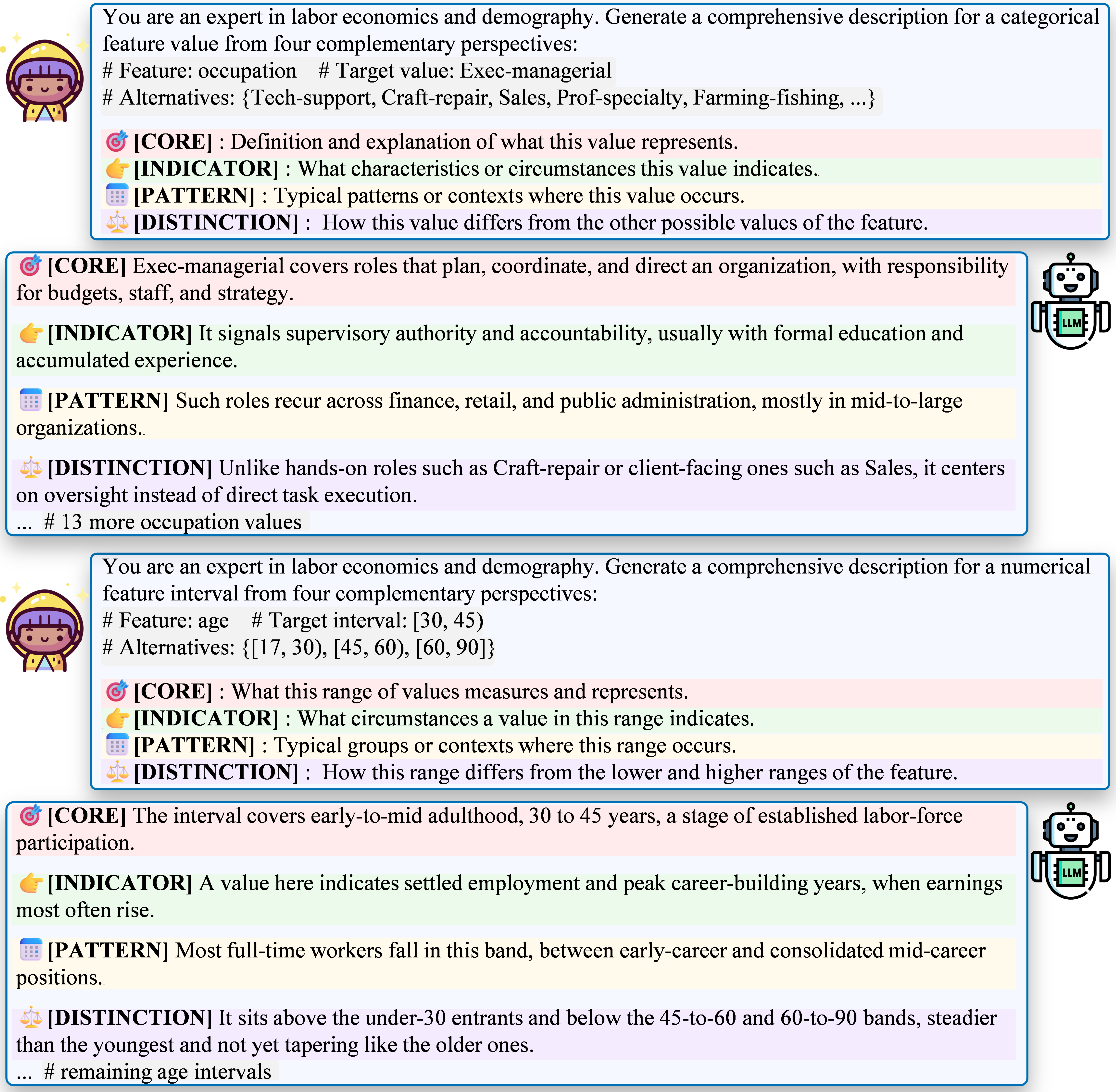}
    \Description{prompt}
    \caption{The $4P$ prompt applied to two representative attributes from the Adult dataset: the categorical value \textit{Exec-managerial} of \textit{occupation} and the numerical interval $[30,45]$ of \textit{age}. For both attribute types, the prompt elicits four complementary semantic perspectives, producing knowledge-grounded descriptions in a unified textual format.}
    \label{fig:prompt}
\end{figure}

\subsection{LLM-Grounded Attribute Description}
\label{sec:semantic}

To unify heterogeneous attribute types into a common representation, GRACE converts every attribute value into a natural-language description generated by an LLM. The description quality depends on prompt design. A naive prompt such as ``\textit{Describe the value has 4 legs}'' produces nearly identical embeddings for semantically distinct values, because it fails to elicit functional associations (e.g., four-legged animals are typically mammals, six-legged ones are insects) or contrasts with alternatives.

Motivated by this, GRACE adopts a four-perspective knowledge elicitation template, denoted $4P$ (\hyperref[fig:prompt]{Fig.~\ref{fig:prompt}}). Within a single query, the template instructs the LLM to produce a structured description for each categorical value $o \in \mathcal{O}^c_j$:
\begin{equation}
t(j, o) = \mathrm{LLM}_{4P}(A_j,\, o),
\label{eq:llm_desc}
\end{equation}
where $\mathrm{LLM}_{4P}$ denotes the language model queried under the four-perspective template, $A_j$ is the attribute name, and $o$ is the value being described. The generated description $t(j, o)$ consists of four complementary components, denoted $t_{\textsc{core}}$, $t_{\textsc{ind}}$, $t_{\textsc{pat}}$, and $t_{\textsc{dis}}$, corresponding to the definitional, functional, contextual, and discriminative perspectives respectively. The first two capture the internal semantics of a value, i.e., what it means and what it implies, and the latter two encode external relationships, i.e., where it co-occurs and how it contrasts with alternatives. The four components are concatenated into a single multi-faceted textual passage that provides rich semantic grounding for downstream encoding.

The $4P$ template additionally includes a domain-adaptive role assignment and quality constraints that require concrete, jargon-free language with domain-specific examples.

The four-perspective template applies directly to categorical attributes, where the set of possible values is finite, and each value receives its own description. For numerical attributes, the set of observed values is typically continuous, making per-value descriptions impractical. GRACE delegates discretization to the LLM, providing the attribute name, observed range, and domain context. The LLM determines both the number of intervals $Q_j$ and their boundaries from domain knowledge, e.g., clinical reference ranges for medical indicators. Formally, the discretization defines a mapping:
\begin{equation}
\psi_j: \mathcal{O}^n_j \;\to\; \mathcal{I}_j,\quad \mathcal{I}_j = \{I_1, I_2, \ldots, I_{Q_j}\},
\label{eq:interval}
\end{equation}
where $\psi_j$ assigns each observed value to one of $Q_j$ intervals whose union covers the entire observed range $[\ell_j, u_j]$. As a concrete example, bilirubin is partitioned into three intervals, i.e., normal (below 1.2\,mg/dL), elevated (1.2 to 3.0\,mg/dL), and severe (above 3.0\,mg/dL), corresponding to the standard diagnostic thresholds in hepatology. Each interval $I_k$ is then described through the same four-perspective query as in \hyperref[eq:llm_desc]{Eq.~\eqref{eq:llm_desc}}, namely $t(j, I_k) = \mathrm{LLM}_{4P}(A_j, I_k)$, giving categorical values and numerical intervals one common description format. All downstream modules therefore operate on a single textual input space, regardless of the original attribute type.

Given these textual descriptions, GRACE encodes each $t(j, o)$ into a $d_e$-dimensional dense embedding $\mathbf{e}(j, o) \in \mathbb{R}^{d_e}$ via a transformer encoder $f_\theta$, which can be expressed as:
\begin{equation}
\mathbf{e}(j, o) = f_\theta\bigl(t(j, o)\bigr),
\label{eq:encode}
\end{equation}
To obtain a sample-level representation $\mathbf{z}_i \in \mathbb{R}^{m \cdot d_e}$ from the $m$ attribute-level embeddings, GRACE concatenates them into a single vector:
\begin{equation}
\mathbf{z}_i = \mathbf{e}(1, o_i^1) \oplus \mathbf{e}(2, o_i^2) \oplus \cdots \oplus \mathbf{e}(m, o_i^m),
\label{eq:concat}
\end{equation}
with $o_i^j$ denoting the (possibly discretized) value of sample $x_i$ on attribute $A_j$. The pairwise \emph{semantic similarity} between two samples is then computed as:
\begin{equation}
S_{il}^{\mathrm{sem}} = \frac{\mathbf{z}_i^\top \mathbf{z}_l}{\|\mathbf{z}_i\|\,\|\mathbf{z}_l\|}.
\label{eq:sem_sim}
\end{equation}

\begin{remark}
\label{rem:align}
\textbf{Robustness to Keyword Alignment.}
GRACE assigns embeddings at the attribute-value level (\hyperref[eq:encode]{Eq.~\eqref{eq:encode}}). Thus, samples sharing the same value $o$ on attribute $A_j$ receive an identical vector $\mathbf{e}(j, o)$ without requiring keyword alignment across different descriptions. Because concatenation allocates disjoint dimensions to each attribute, the inner product $\mathbf{z}_i^\top \mathbf{z}_l$ decomposes into per-attribute terms. For every attribute on which $x_i$ and $x_l$ agree, the corresponding term equals $\|\mathbf{e}(j, o)\|^2$, a constant determined solely by value identity. Misaligned keywords can only affect the terms for attributes on which the two samples differ. Within-cluster sample pairs typically share more attribute values than between-cluster pairs, making their similarity estimates rely more heavily on value-identity anchors than on potentially misaligned cross-value terms. Keyword misalignment therefore has a bounded influence on the semantic affinity, which explains the robustness of GRACE at the clustering level.
\end{remark}

\subsection{Dual-View Neighborhood Consistency}
\label{sec:dsnc}


The semantic similarity matrix $\mathbf{S}^{\mathrm{sem}}$ quantifies pairwise conceptual proximity among samples, but in an unsupervised setting, no external labels are available to assess whether specific estimates are accurate. A principled remedy is to cross-check against the raw tabular features, which encode sample relationships through observed co-occurrence patterns.

GRACE implements this cross-check through a statistical view that operates on the original feature space. Because the statistical view serves a purely confirmatory role, an expressive encoding such as entity embeddings~\cite{guo2016entity} or learned tabular representations~\cite{gorishniy2021revisiting} would introduce trainable parameters that require labeled supervision or auxiliary objectives to tune, conflicting with the unsupervised setting. GRACE therefore adopts a parameter-free encoding.
For each categorical attribute $A_j \in \mathcal{A}^c$, every value $o \in \mathcal{O}^c_j$ is mapped to a standard basis vector $\mathbf{b}_o \in \mathbb{R}^{V_j}$, and for each numerical attribute $A_j \in \mathcal{A}^n$, observed values are normalized to $[0,1]$ via min-max scaling.
Concatenating these attribute-level representations yields a feature vector $\mathbf{h}_i \in \mathbb{R}^{d_h}$ with dimensionality $d_h = \sum_{A_j \in \mathcal{A}^c} V_j + |\mathcal{A}^n|$.
The resulting pairwise statistical similarity between samples $x_i$ and $x_l$, denoted by $S^{\mathrm{stat}}_{il}$, is then defined as:
\begin{equation}
S_{il}^{\mathrm{stat}} = \frac{\mathbf{h}_i^\top \mathbf{h}_l}{\|\mathbf{h}_i\|\,\|\mathbf{h}_l\|}.
\label{eq:stat_sim}
\end{equation}


Given $\mathbf{S}^{\mathrm{sem}}$ and $\mathbf{S}^{\mathrm{stat}}$, the next step is to identify neighbors under each view. Comparing raw values across views is problematic because the two matrices operate on different scales. Comparing \emph{local neighborhood structures} is more principled, since clustering depends on relative proximity, not absolute magnitude.
To this end, GRACE extracts neighborhood structures using Natural Neighbor (NaN) graphs~\cite{zhu2016natural}, which impose a mutual recognition constraint.
Specifically, a sample $x_l$ qualifies as a natural neighbor of $x_i$ only if $x_i$ also appears among the nearest neighbors of $x_l$:
\begin{equation}
\mathrm{NN}_{il}(\mathbf{S}) = \mathbb{I}\bigl[l \in \mathcal{N}_\lambda(i;\mathbf{S})\bigr] \cdot \mathbb{I}\bigl[i \in \mathcal{N}_\lambda(l;\mathbf{S})\bigr],
\label{eq:nn}
\end{equation}
in which $\mathbf{S}$ is a generic similarity matrix standing for either view, and $\mathcal{N}_\lambda(i;\mathbf{S})$ denotes the $\lambda$-nearest neighbors of $x_i$ under $\mathbf{S}$.
The mutual constraint is well-suited to the verification task, because one-directional affinities, i.e., cases where $x_i$ considers $x_l$ a neighbor but not vice versa, often arise in regions of varying density and tend to be unreliable indicators of cluster membership.
By requiring reciprocation, the NaN graph retains only the most structurally stable relationships in each view, and the neighborhood size $\lambda$ is determined automatically by iterating from $\lambda = 1$ until every sample possesses at least one natural neighbor.


Instantiating \hyperref[eq:nn]{Eq.~\eqref{eq:nn}} with the semantic matrix $\mathbf{S}^{\mathrm{sem}} = [S^{\mathrm{sem}}_{il}]$ from \hyperref[eq:sem_sim]{Eq.~\eqref{eq:sem_sim}} and the statistical matrix $\mathbf{S}^{\mathrm{stat}} = [S^{\mathrm{stat}}_{il}]$ from \hyperref[eq:stat_sim]{Eq.~\eqref{eq:stat_sim}} produces the two view-specific neighborhood matrices:
\begin{equation}
\mathrm{NN}^{\mathrm{sem}}_{il} = \mathrm{NN}_{il}(\mathbf{S}^{\mathrm{sem}}), \qquad
\mathrm{NN}^{\mathrm{stat}}_{il} = \mathrm{NN}_{il}(\mathbf{S}^{\mathrm{stat}}).
\label{eq:nn_views}
\end{equation}
Their agreement defines the \emph{neighborhood consistency score} of a pair $(x_i, x_l)$:
\begin{equation}
\Gamma_{il} = \mathrm{NN}_{il}^{\mathrm{sem}} \cdot \mathrm{NN}_{il}^{\mathrm{stat}}.
\label{eq:consistency}
\end{equation}
A value of $\Gamma_{il} = 1$ thus marks a pair that the semantic and statistical views both recognize as neighbors.
Because the two views are constructed from independent information sources, a pair confirmed by both carries stronger evidence, consistent with the co-training principle~\cite{blum1998combining} that agreement between conditionally independent views yields reduced false-positive rates.


The reduced false-positive rate noted above is what makes a confirmed pair worth rewarding.
A proposition in the \href{https://github.com/develop-yang/GRACE-GRACE-A}{\underline{supplementary material}} formalizes the argument, proving that a dual-view-confirmed pair ($\Gamma_{il}=1$) carries a strictly higher posterior probability of being a true within-cluster pair than a pair supported by the semantic view alone.
GRACE therefore rewards dual-view confirmation by scaling the semantic affinity:
\begin{equation}
\rho_{il} = S_{il}^{\mathrm{sem}} \cdot \alpha_{il},
\label{eq:refine}
\end{equation}
whose entries form the refined affinity matrix $\mathbf{W}$ with $W_{il} = \rho_{il}$, through the coefficient:
\begin{equation}
\alpha_{il} = 1 + \Gamma_{il}, \qquad \Gamma_{il} \in \{0,1\}.
\label{eq:alpha}
\end{equation}
The multiplicative and proportional form is deliberate. An unconfirmed pair ($\Gamma_{il}=0$) keeps its original semantic value, which leaves the knowledge-grounded signal undistorted, whereas a confirmed pair is boosted in proportion to its existing semantic strength, letting statistical agreement reinforce the semantic evidence instead of overriding it.

\begin{remark}
\label{rem:asymmetric}
\textbf{Asymmetric View Design.}
GRACE assigns asymmetric roles to the two views. The semantic view supplies the primary affinity structure enriched by external knowledge, and the statistical view provides dataset-specific calibration from the original features. Instead of averaging the two signals, GRACE uses statistical agreement as selective evidence to enhance semantic affinity, preserving the external-knowledge signal and aligning it with the observed clustering structure.
\end{remark}

\subsection{Model Training and Spectral Clustering}
\label{sec:training}


The semantic affinity $S^{\mathrm{sem}}_{il}$ in \hyperref[eq:sem_sim]{Eq.~\eqref{eq:sem_sim}} is the cosine similarity between the sample vectors $\mathbf{z}_i$ and $\mathbf{z}_l$, and \hyperref[eq:concat]{Eq.~\eqref{eq:concat}} builds each $\mathbf{z}_i$ from the per-attribute embeddings $\mathbf{e}(j,o) = f_\theta(t(j,o))$ of \hyperref[eq:encode]{Eq.~\eqref{eq:encode}}. The encoder $f_\theta$ therefore governs the entire metric, and adapting it to the grounded descriptions is what reshapes $S^{\mathrm{sem}}$. Such adaptation calls for a contrastive objective, one that keeps descriptions of the same value close and drives those of different values apart under the cosine geometry of \hyperref[eq:sem_sim]{Eq.~\eqref{eq:sem_sim}}.

Concretely, GRACE fine-tunes $f_\theta$ with an InfoNCE loss~\cite{oord2018representation}. The objective is well suited to attribute descriptions since it preserves the semantic identity of each grounded value and separates alternative values within the same attribute context. For an anchor description $t(j,o)$, the positive example $t^+$ is obtained by encoding the same text with independently sampled dropout masks, following~\cite{gao2021simcse}, and the remaining descriptions in the mini-batch $\mathcal{B}$ serve as negatives.
The per-anchor loss is:
\begin{equation}
\mathcal{L}(t) = -\log \frac{\exp\bigl(\mathrm{sim}(f_\theta(t),\, f_\theta(t^+)) / \tau_e\bigr)}{\sum_{t' \in \mathcal{B}} \exp\bigl(\mathrm{sim}(f_\theta(t),\, f_\theta(t')) / \tau_e\bigr)},
\label{eq:infonce}
\end{equation}
in which $\mathrm{sim}(\cdot, \cdot)$ denotes cosine similarity and $\tau_e$ is a temperature hyperparameter.
Descriptions from different values of the same attribute naturally act as hard negatives, because they share domain context yet refer to distinct categorical states.
The full encoder objective averages over all anchors:
\begin{equation}
\mathcal{L}_{\mathrm{enc}} = \frac{1}{|\mathcal{B}|} \sum_{t \in \mathcal{B}} \mathcal{L}(t).
\label{eq:enc_loss}
\end{equation}
The encoder is optimized with AdamW~\cite{loshchilov2019decoupled} for $E$ epochs using a learning rate of $\eta$ and a batch size $|\mathcal{B}|$.

After training, the refined affinity matrix $\mathbf{W}$ from \hyperref[eq:refine]{Eq.~\eqref{eq:refine}} retains pairwise affinities for all sample pairs, including weak relations with limited contribution to the underlying cluster structure. Applying spectral clustering directly to such a fully connected affinity graph may propagate low-confidence edges into the spectral embedding, weakening cluster separation.
GRACE therefore converts $\mathbf{W}$ into a sparse binary graph by thresholding at a value $\theta$, producing the adjacency matrix $(\mathbf{A}_\theta)_{il} = \mathbb{I}[\rho_{il} > \theta]$.
The threshold $\theta$ controls the trade-off between edge preservation and graph sparsity. A small $\theta$ tends to retain unreliable weak affinities, whereas a large $\theta$ may remove structurally necessary edges and fragment the graph into isolated components.
To avoid threshold-induced fragmentation, GRACE first identifies the candidate set $\Theta^*$ of thresholds that achieve minimal graph fragmentation. Following the spectral-theoretic analysis of~\cite{fan2022simple}, GRACE then selects from $\Theta^*$ the value $\theta^*$ that maximizes the Relative Eigengap Quality (REQ) of the normalized Laplacian $\mathbf{L}_\theta$ of $\mathbf{A}_\theta$:
\begin{equation}
\theta^* = \arg\max_{\theta \in \Theta^*}\; \frac{\lambda_{K+1}(\mathbf{L}_\theta) - \bar{\lambda}_K(\mathbf{L}_\theta)}{\bar{\lambda}_K(\mathbf{L}_\theta) + \epsilon},
\label{eq:req}
\end{equation}
where $\lambda_{K+1}$ is the $(K\!+\!1)$-th smallest eigenvalue of $\mathbf{L}_\theta$, $\bar{\lambda}_K$ denotes the mean of the first $K$ eigenvalues, and $\epsilon = 10^{-6}$.
A larger REQ indicates a wider spectral gap, corresponding to better-separated clusters. The $K$ smallest eigenvectors of $\mathbf{L}_{\theta^*}$ are then extracted and row-normalized, and $k$-means is applied to the rows to produce $\mathcal{C} = \{C_1, \ldots, C_K\}$, following the normalized spectral clustering algorithm~\cite{ng2001spectral}.

\begin{algorithm}[t]
\small
\caption{\small{GRACE: GRounding Attributes for 
Clustering via External Semantics}}
\label{alg:grace}
\KwIn{Dataset $\mathcal{D}$ with attributes 
$\mathcal{A}$, cluster count $K$, threshold 
candidates $\Theta$}
\KwOut{Cluster assignments $\mathcal{C} = \{C_1, 
\ldots, C_K\}$}
\tcp{\colorbox{rowgray}{\textbf{Knowledge-Grounded 
Attribute Description}}}
\ForEach{$A_j \in \mathcal{A}$}{
    \eIf{$A_j \in \mathcal{A}^c$}{
        Generate description $t(j, o)$ for each 
        $o \in \mathcal{O}^c_j$ via LLM\;
    }{
        Discretize into $\mathcal{I}_j$ and 
        generate descriptions via LLM\;
    }
}
Fine-tune encoder $f_\theta$ with InfoNCE loss 
\tcp*{\hyperref[eq:enc_loss]{Eq.~\eqref
{eq:enc_loss}}}
$\mathbf{z}_i \gets \bigoplus_{j=1}^{m} 
f_\theta(t(j, x_i^j))$ for each $x_i$ 
\tcp*{\hyperref[eq:concat]{Eq.~\eqref{eq:concat}}}
Compute semantic similarity 
$\mathbf{S}^{\mathrm{sem}}$ 
\tcp*{\hyperref[eq:sem_sim]{Eq.~\eqref
{eq:sem_sim}}}
\tcp{\colorbox{rowgray}{\textbf{Dual-View 
Neighborhood Consistency}}}
Encode statistical features $\{\mathbf{h}_i\}$ and 
compute $\mathbf{S}^{\mathrm{stat}}$ 
\tcp*{\hyperref[eq:stat_sim]{Eq.~\eqref
{eq:stat_sim}}}
Build natural neighbor graphs 
$\mathbf{NN}^{\mathrm{sem}}, 
\mathbf{NN}^{\mathrm{stat}}$ 
\tcp*{\hyperref[eq:nn]{Eq.~\eqref{eq:nn}}}
$\boldsymbol{\Gamma} \gets 
\mathbf{NN}^{\mathrm{sem}} \odot 
\mathbf{NN}^{\mathrm{stat}}$ 
\tcp*{\hyperref[eq:consistency]{Eq.~\eqref
{eq:consistency}}}
Refine affinity $\rho_{il}$ via hierarchical 
enhancement 
\tcp*{\hyperref[eq:refine]{Eq.~\eqref{eq:refine}}}
\tcp{\colorbox{rowgray}{\textbf{Graph 
Sparsification and Spectral Clustering}}}
\ForEach{$\theta \in \Theta$}{
    Construct $\mathbf{A}_\theta$ and evaluate 
    $\mathrm{REQ}(\mathbf{L}_\theta)$ 
    \tcp*{\hyperref[eq:req]{Eq.~\eqref{eq:req}}}
}
Select $\theta^*$ maximizing REQ among thresholds 
with minimal fragmentation\;
\Return{Normalized spectral clustering on 
$\mathbf{A}_{\theta^*}$}
\end{algorithm}

\subsection{Complexity Analysis}
\label{sec:complexity}

The complete GRACE procedure is summarized in \hyperref[alg:grace]{Algorithm~\ref{alg:grace}}. Because the exact formulation forms a dense affinity over every sample pair and then performs spectral clustering on it, the eigendecomposition of the $n \times n$ Laplacian drives the cost to $O(n^3)$, which becomes prohibitive at the scale of large tables. To retain the same affinity design at a far lower cost, we further develop an approximation variant, denoted GRACE-A, where ``A'' stands for Approximation. GRACE-A replaces the dense pairwise graph with a sample-to-anchor bipartite graph over $p \ll n$ anchors.

GRACE-A rests on an anchor-based approximation of the affinity structure. A set of $p$ anchors is first selected by farthest-point sampling, which spreads them across the embedding space and keeps them representative of the latent clusters. Each sample is then linked only to these anchors through a sample-to-anchor affinity matrix $\mathbf{Z} \in \mathbb{R}^{n \times p}$, replacing the full $n \times n$ graph. A truncated factorization of $\mathbf{Z}$ recovers the leading spectral components of the implied global graph in the Nystr\"{o}m sense, after which $k$-means on the recovered embedding yields the partition. The construction thereby involves each sample only a constant number of times once $p$ is fixed, which lowers the order from cubic to linear in $n$.

The following theorems establish the time and space complexity of GRACE and GRACE-A. Throughout, $n$, $m$, $K$, and $p$ count the samples, attributes, clusters, and anchors, $d_e$ denotes the embedding dimension, and $V$ the number of distinct attribute values, following \hyperref[tab:notation]{Table~\ref{tab:notation}}.

\begin{theorem}
\label{thm:time_grace}
Standard GRACE runs in $O(n^3 + n^2 m d_e)$ time.
\end{theorem}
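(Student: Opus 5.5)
The plan is to walk through Algorithm~\ref{alg:grace} stage by stage, bound the cost of each stage in terms of $n$, $m$, $d_e$, $K$, $V$ and $|\Theta|$, and add the bounds. Treated as constants are the LLM calls, the encoder forward/backward cost per description, the number of epochs $E$, and $|\Theta|$. Only the terms that grow with $n$ are kept in the final order; the others are absorbed.

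\emph{Value-level stages.} Grounding issues one LLM query per categorical value or numerical interval, so it costs $O(V)$ calls, independent of $n$. Fine-tuning with Eq.~\eqref{eq:enc_loss} involves only the $V$ descriptions, so it costs $O(EV)$ encoder passes, again independent of $n$. Each $\mathbf{e}(j,o)$ is then encoded once and cached. Assembling every $\mathbf{z}_i$ via Eq.~\eqref{eq:concat} is a table lookup plus concatenation, costing $O(nmd_e)$. Building the statistical features $\mathbf{h}_i$ costs $O(nd_h)$ with $d_h\le V+m$.

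\emph{Pairwise stages.} By Eq.~\eqref{eq:sem_sim}, $\mathbf{S}^{\mathrm{sem}}$ requires $n^2$ inner products of vectors of length $md_e$, for $O(n^2md_e)$. This is the second term of the statement. $\mathbf{S}^{\mathrm{stat}}$ costs $O(n^2 d_h)$, and since each $\mathbf{h}_i$ has only $m$ nonzeros (one-hot blocks plus $|\mathcal{A}^n|$ scalars), this can be taken as $O(n^2m)$, dominated by the semantic term. For the NaN graphs of Eq.~\eqref{eq:nn}, each row is sorted once in $O(n^2\log n)$. The search over $\lambda=1,2,\dots$ can be run incrementally, adding one neighbor per sample per round and updating the mutual counts, so the total is $O(n^2)$ even though $\lambda\le n$. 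The same holds for the consistency matrix $\boldsymbol{\Gamma}$ and the refinement of Eqs.~\eqref{eq:refine}--\eqref{eq:alpha}, each $O(n^2)$. All these terms are $O(n^3)$.

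\emph{Spectral stage.} For each $\theta\in\Theta$, building $\mathbf{A}_\theta$ costs $O(n^2)$, counting connected components for the fragmentation criterion also costs $O(n^2)$, and computing $\mathbf{L}_\theta$ together with the eigenvalues needed for Eq.~\eqref{eq:req} takes a full symmetric eigendecomposition in $O(n^3)$. Over the constant-size $\Theta$ this totals $O(n^3)$. The final eigenvectors, row normalization and $k$-means run in $O(n^3 + nK^2 T)$, and the $k$-means term is absorbed.

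Summing the stages gives $O(n^3+n^2md_e)$ plus $n$-independent value-level costs. The main obstacle is justifying that the NaN search and the REQ sweep do not add extra factors of $n$. For the NaN search, I would use the incremental argument above; failing that, $\lambda\le n$ rounds of $O(n)$ mutual-check updates still give $O(n^2)$. For the REQ sweep, I would state explicitly that $|\Theta|$ is a fixed-size candidate grid.
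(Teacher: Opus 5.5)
Your proposal is correct and uses the same stage-by-stage accounting as the paper's proof. The value-level grounding is independent of $n$, the dense cosine affinity over $(m d_e)$-dimensional vectors costs $O(n^2 m d_e)$, the dual-view refinement costs $O(n^2)$, eigendecomposing $\mathbf{L}_\theta$ over a fixed threshold budget costs $O(n^3)$, and the $O(tnK^2)$ $k$-means term is absorbed. Your extra bookkeeping (the sparse $O(n^2 m)$ bound on $\mathbf{S}^{\mathrm{stat}}$, the row sorting, and the NaN search) is more careful than the paper, which does not account for those steps explicitly; the NaN search is also not really an obstacle, since recomputing the mutual-neighbor graph from scratch for each of at most $n$ values of $\lambda$ still costs only $O(n^3)$.
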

\begin{proof}
The procedure divides into value grounding, affinity construction, and spectral clustering. Grounding the $V$ distinct attribute values, namely description generation, encoder fine-tuning, and value-level embedding, scales with $V$ instead of $n$ and costs $O(V d_e)$. Assembling the $n$ sample vectors and evaluating the dense semantic affinity over all $n^2$ pairs then takes $O(n^2 m d_e)$, since every entry is an inner product of two $(m d_e)$-dimensional vectors. The dual-view refinement manipulates $n \times n$ matrices at $O(n^2)$, a term the spectral stage subsumes. Eigendecomposing the normalized Laplacian $\mathbf{L}_\theta$ across a fixed budget of candidate thresholds requires $O(n^3)$, whereas the ensuing $k$-means adds only $O(t n K^2)$. Given $n \gg K$, the eigendecomposition and the affinity construction govern the total, which gives $O(n^3 + n^2 m d_e)$.
\end{proof}

\begin{theorem}
\label{thm:space_grace}
Standard GRACE requires $O(n^2)$ space.
\end{theorem}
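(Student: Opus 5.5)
The plan is to mirror the stage decomposition used for Theorem~\ref{thm:time_grace} and bound the peak memory of each stage, showing that every stage holds at most a constant number of $n \times n$ arrays plus lower-order storage. The total space is then the maximum of the per-stage bounds, not their sum, because intermediate structures can be released between stages.

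\textbf{Value grounding.} The $V$ descriptions and their value-level embeddings $\mathbf{e}(j,o)$ occupy $O(V d_e)$. The encoder parameters are a fixed constant independent of $n$. Since $V$ and $d_e$ do not grow with $n$, this term is lower order; equivalently, $V \le n m$ and $d_e$ is fixed.

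\textbf{Sample representations.} I would keep the sample vectors in implicit form: each $\mathbf{z}_i$ is the list of $m$ value indices into the embedding table, costing $O(nm)$. Only the statistical features $\mathbf{h}_i$ are materialized, in sparse form with one nonzero per categorical attribute, also $O(nm)$. The semantic similarity $\mathbf{S}^{\mathrm{sem}}$ can then be filled entrywise by inner products computed from the table on the fly, so its storage is $O(n^2)$ with no $m d_e$ factor. The same holds for $\mathbf{S}^{\mathrm{stat}}$.

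\textbf{Neighborhood and refinement.} The natural neighbor search needs the $\lambda$-nearest neighbor lists, which take $O(n\lambda) \subseteq O(n^2)$ space. The matrices $\mathbf{NN}^{\mathrm{sem}}$, $\mathbf{NN}^{\mathrm{stat}}$, $\boldsymbol{\Gamma}$ and $\mathbf{W}$ are each $n \times n$ and are all stored at once, so their combined cost is still $O(n^2)$. Once $\mathbf{W}$ is built, $\mathbf{S}^{\mathrm{stat}}$ can be discarded.

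\textbf{Spectral stage.} For each candidate $\theta$ we form $\mathbf{A}_\theta$ and $\mathbf{L}_\theta$, both $O(n^2)$, and reuse the buffers across the fixed set of thresholds. The eigendecomposition produces at most an $n \times n$ eigenvector matrix plus workspace, which dense symmetric eigensolvers keep at $O(n^2)$. The $k$-means step on the $n \times K$ embedding adds $O(nK)$.

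\textbf{Main obstacle.} The step I expect to need care is the sample representation. If the $\mathbf{z}_i$ are stored naively as dense vectors, the bound becomes $O(n m d_e)$, which is not absorbed into $O(n^2)$ unless $m d_e \le n$. The argument must therefore either use the implicit index-based storage above or state the standing assumption $n \gg m d_e$, matching the regime $n \gg K$ used in Theorem~\ref{thm:time_grace}. I would adopt the implicit-storage route so that the claim holds unconditionally, and the maximum over stages then gives $O(n^2)$.
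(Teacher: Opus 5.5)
Your argument is correct in substance and rests on the same principle as the paper's proof: a constant number of dense $n \times n$ arrays dominate, and everything else is lower order. The real difference is how you absorb the per-sample embedding term.

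The paper charges the embeddings, i.e. the materialized sample vectors $\mathbf{z}_i$, $O(n m d_e)$ and simply assumes $n > m d_e$. That one assumption also covers the input table and the $O(nK)$ spectral embedding. You instead store each $\mathbf{z}_i$ as $m$ indices into the value table and fill $\mathbf{S}^{\mathrm{sem}}$ entrywise on the fly. This costs no extra time, since Theorem~\ref{thm:time_grace} already charges $O(n^2 m d_e)$. You also track peak memory (neighbor lists, several coexisting $n \times n$ buffers, eigensolver workspace) more carefully than the paper's one-line ``largest object'' argument.

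This does not, however, make the bound unconditional. It replaces $O(n m d_e)$ by $O(nm + V d_e)$, and neither part comes for free:
\begin{itemize}
\item The $O(nm)$ part cannot be avoided, since the input table alone has $nm$ entries, so you still need $m = O(n)$.
\item The value table needs $V d_e = O(n^2)$.
\end{itemize}
Your phrase ``equivalently, $V \le nm$ and $d_e$ is fixed'' is not an equivalence. Keep $d_e$ as a parameter, as in Theorem~\ref{thm:time_grace}, and let $V$ approach $nm$ (near-unique categorical values). Then the table is again of order $n m d_e$, and you are back at the paper's condition. If instead $d_e$ is a constant, naive dense storage of $\mathbf{z}_i$ is already $O(nm)$, and your ``main obstacle'' disappears.

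What your route genuinely buys is a weaker sufficient condition: $m = O(n)$ and $V d_e = O(n^2)$. Both are implied by the paper's $n > m d_e$, because observed values satisfy $V \le nm$, and both hold comfortably in the realistic regime $V \ll nm$ of the benchmarks. The paper's route buys brevity with a single explicit assumption. You should state your conditions rather than claim unconditional validity.
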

\begin{proof}
Storage is dictated by the largest object the pipeline keeps in memory. The dense affinity matrix occupies $O(n^2)$ and dominates every other term, since the input table $\mathcal{D} \in \mathbb{R}^{n \times m}$, the value embeddings, and the spectral embedding $\mathbf{Y} \in \mathbb{R}^{n \times K}$ need only $O(nm)$, $O(n m d_e)$, and $O(nK)$ respectively, each below $O(n^2)$ whenever $n$ exceeds $m d_e$. The space complexity is therefore $O(n^2)$.
\end{proof}

GRACE-A lowers both bounds to linear order, as the next two theorems show.

\begin{theorem}
\label{thm:time_gracea}
GRACE-A runs in $O\bigl(n p (m d_e + p)\bigr)$ time, linear in $n$ for fixed $p$.
\end{theorem}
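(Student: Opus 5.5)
The plan mirrors the proof of Theorem~\ref{thm:time_grace}: split GRACE-A into value grounding, anchor selection, sample-to-anchor affinity with dual-view refinement, and spectral recovery with $k$-means. I bound each stage separately and then sum. Two facts carry over unchanged. First, value grounding depends only on the $V$ distinct attribute values and costs $O(V d_e)$. Second, assembling the $n$ concatenated sample vectors $\mathbf{z}_i$ from the value-level table costs $O(n m d_e)$. Both terms are dominated by the target bound, since $V \le nm$ up to numerical intervals.

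\textbf{Anchor selection and affinity.} Farthest-point sampling keeps, for every sample, its current distance to the nearest chosen anchor. Each new anchor therefore requires one distance evaluation per sample, costing $O(n m d_e)$ per anchor and $O(n p m d_e)$ over all $p$ anchors. Building $\mathbf{Z}\in\mathbb{R}^{n\times p}$ evaluates $np$ cosine similarities of $(m d_e)$-dimensional vectors, again $O(npmd_e)$. The statistical view contributes at most the same order, because $d_h \le m\max_j V_j$ and the one-hot inner products reduce to $O(m)$ per pair. The dual-view neighborhood refinement, restricted to sample-to-anchor neighbor lists, is computed by selecting the top-$\lambda$ entries per row and per column of $\mathbf{Z}$. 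This costs $O(np)$ per value of $\lambda$. Threshold sparsification over a fixed candidate budget is also $O(np)$ per candidate.

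\textbf{Spectral recovery and $k$-means.} The Nystr\"{o}m step first normalizes $\mathbf{Z}$ by the row and column degrees, at cost $O(np)$. It then forms $\mathbf{Z}^\top\mathbf{Z}$ in $O(np^2)$ and eigendecomposes this $p\times p$ matrix in $O(p^3)$. The left singular vectors are recovered via $\mathbf{Z}\mathbf{V}\Sigma^{-1}$ in $O(npK)$. Since $p\le n$ and $K\le p$, all of these terms are $O(np^2)$. The REQ evaluation uses the same $p\times p$ spectrum, so repeating it over a fixed number of thresholds only multiplies the cost by a constant. $k$-means on the $n\times K$ embedding adds $O(tnK^2)$, which is subsumed under $K\le p$ for a fixed iteration count $t$. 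Summing the stages gives $O(npmd_e + np^2) = O(np(md_e+p))$, which is linear in $n$ for fixed $p$.

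\textbf{Main obstacle.} The delicate step is the natural-neighbor search. Its adaptive radius $\lambda$ iterates until every sample has a reciprocal neighbor, and mutual $\lambda$-NN in the full $n\times n$ sense would cost quadratic time. I would first argue that in GRACE-A the neighbor relations are taken between samples and anchors only, so each $\lambda$ candidate costs $O(np)$. Because $\lambda$ never exceeds $p$, incremental partial sorting of each row of $\mathbf{Z}$ gives a total of $O(np\log p)$, or $O(np^2)$ in the worst case, which the $np^2$ term absorbs. If the construction instead needs sample-to-sample neighbors, I would bound them through the anchor graph, using the implied affinity $\mathbf{Z}\Lambda^{-1}\mathbf{Z}^\top$ restricted to each sample's anchors. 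This keeps the cost at $O(np^2)$.
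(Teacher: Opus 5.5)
Your core accounting matches the paper's proof: farthest-point selection plus construction of $\mathbf{Z}$ at $O(npmd_e)$, a truncated factorization at $O(np^2)$, and $k$-means absorbed for fixed $p$. The paper's proof charges only these stages. The problem is in the extra stage you single out as the crux, the adaptive natural-neighbor search, where the claim that ``$\lambda$ never exceeds $p$'' is false. Under your bipartite mutual rule, sample $x_i$ has a natural neighbor only if it lies among the top-$\lambda$ entries of some column of $\mathbf{Z}$. The $p$ columns together cover at most $p\lambda$ samples (ties aside), so the stopping rule forces $\lambda \ge n/p$. This exceeds $p$ whenever $n > p^2$, which is exactly the fixed-$p$, growing-$n$ regime the theorem is about. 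Once $\lambda \ge p$, the row condition is vacuous and the search is driven by column ranks over all $n$ samples. Running at least $n/p$ rounds of $O(np)$ work each is then quadratic in $n$. Even sorting each column once, or binary-searching $\lambda$ with linear-time selection, costs $O(np\log n)$, which $O\bigl(np(md_e+p)\bigr)$ does not absorb for fixed $p$, $m$, $d_e$. Your fallback through $\mathbf{Z}\Lambda^{-1}\mathbf{Z}^\top$ does not rescue this either. Your $\mathbf{Z}$ is dense, and more generally a single anchor can be shared by $\Theta(n)$ samples. So each sample's candidate set ``through its anchors'' has size $\Theta(n)$, and finding $\lambda$-nearest samples is quadratic in the worst case.

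You have two ways to close the gap. First, you can follow the paper: GRACE-A as described (anchors, $\mathbf{Z}$, truncated factorization, $k$-means) performs no adaptive-radius search over the samples, so there is nothing to charge. Second, if you want to keep a dual-view confirmation inside GRACE-A, replace the stopping rule by a hard cap $\lambda \le p$ (or a constant), accepting that some samples end unconfirmed. This is harmless, because $\Gamma_{il}=0$ merely leaves the semantic affinity unboosted. With the cap, your $O(np\log p)$ or $O(np^2)$ estimate becomes valid and is absorbed into the stated bound. The remaining stages of your accounting are sound.
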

\begin{proof}
The anchor approximation eliminates the cubic and quadratic terms of \hyperref[thm:time_grace]{Theorem~\ref{thm:time_grace}}. Farthest-point selection of the $p$ anchors and construction of the sample-to-anchor affinity $\mathbf{Z} \in \mathbb{R}^{n \times p}$, which connects each sample to the anchors alone, together cost $O(n p m d_e)$ in place of the $O(n^2 m d_e)$ dense affinity. A truncated factorization replaces the full eigendecomposition at $O(n p^2)$, and the closing $k$-means contributes $O(t n K^2)$, both staying within the same order for fixed $p$. Collecting the terms yields $O\bigl(n p (m d_e + p)\bigr)$, which grows linearly in $n$ because $p$ stays bounded independently of $n$.
\end{proof}

\begin{theorem}
\label{thm:space_gracea}
GRACE-A requires $O\bigl(n (m d_e + p)\bigr)$ space.
\end{theorem}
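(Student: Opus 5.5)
The bound follows by listing every object GRACE-A keeps in memory at its peak and checking that none is quadratic in $n$. This mirrors the proof of Theorem~\ref{thm:space_grace}. The key point is that the anchor construction in the proof of Theorem~\ref{thm:time_gracea} never forms an $n \times n$ matrix: neither the dense $\mathbf{S}^{\mathrm{sem}}$, nor $\mathbf{W}$, nor the Laplacian $\mathbf{L}_\theta$. So the $O(n^2)$ term that dominated standard GRACE disappears.

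\textbf{Per-sample representations.} The first group holds the per-sample objects.
\begin{itemize}
\item The input table $\mathcal{D}$ takes $O(nm)$.
\item The semantic vectors $\mathbf{z}_i \in \mathbb{R}^{m d_e}$, stacked over all samples, take $O(n m d_e)$. This is the term that produces the $n m d_e$ part of the bound.
\item The value-level embeddings $\mathbf{e}(j,o)$ take only $O(V d_e)$. Since at most $nm$ values occur, this is at most $O(n m d_e)$.
\item The statistical features $\mathbf{h}_i$, and any neighbor lists, sit in the same budget, provided any neighborhood or consistency information is computed against anchors or stored sparsely rather than densely.
\end{itemize}

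\textbf{Anchor-side objects.} The second group comes from the anchor approximation.
\begin{itemize}
\item The $p$ anchors take $O(p m d_e)$, which is $O(n m d_e)$ since $p \le n$.
\item Farthest-point sampling needs only a length-$n$ array of current minimum distances, i.e.\ $O(n)$.
\item The sample-to-anchor affinity $\mathbf{Z} \in \mathbb{R}^{n\times p}$ takes $O(np)$. This is the term that produces the $np$ part of the bound.
\item The truncated factorization can be carried out through the $p \times p$ Gram matrix $\mathbf{Z}^\top\mathbf{Z}$, costing $O(p^2) \subseteq O(np)$, plus an $n \times K$ embedding.
\item $k$-means on the $n \times K$ embedding needs $O(nK + K^2)$.
\end{itemize}
Both of these last items are absorbed because $K \le p$; I would state $K \le p$ as an explicit assumption, since the anchors must at least span the $K$ clusters.

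\textbf{Summing.} Adding everything gives $O(nm + n m d_e + np + p^2 + nK)$, which collapses to $O\bigl(n(m d_e + p)\bigr)$.

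\textbf{Main obstacle.} The difficult step is justifying that the dual-view refinement and the REQ-based threshold search in GRACE-A avoid dense $n\times n$ storage. The paper's description of GRACE-A only says the dense graph is replaced by the bipartite one. I would argue two points. First, the enhancement $\alpha$ is applied entrywise to $\mathbf{Z}$, which costs $O(np)$. Second, the thresholded spectrum is obtained from the $p\times p$ or $n\times p$ factorization, so each candidate threshold reuses $O(np)$ memory sequentially. Finally, the space for encoder fine-tuning depends only on $V$ and the batch size, not on $n$, so it is treated as a constant with respect to $n$.
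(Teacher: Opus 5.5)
Your proposal is correct and follows the same route as the paper: list what GRACE-A keeps in memory, note that nothing scales as $n^2$, and take $\mathbf{Z}\in\mathbb{R}^{n\times p}$ at $O(np)$ and the embeddings plus input table at $O(nmd_e)$ as the dominant terms. The paper's proof is a terser version of this same accounting, and your explicit assumption $K\le p$ together with your point that the dual-view refinement and threshold search must act on the anchor graph rather than a dense $n\times n$ matrix state openly what the paper leaves implicit.
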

\begin{proof}
The same anchoring contracts the footprint. The sample-to-anchor affinity $\mathbf{Z}$ holds $O(np)$ instead of the $O(n^2)$ dense matrix of \hyperref[thm:space_grace]{Theorem~\ref{thm:space_grace}}, and the value embeddings together with the input table account for the remaining $O(n m d_e)$. Because no retained structure scales with $n^2$, the dominant term is $O\bigl(n (m d_e + p)\bigr)$, again linear in $n$.
\end{proof}

Taken together, the four bounds capture the trade-off between the exact model and its scalable variant. Standard GRACE computes the knowledge-grounded affinity exactly, yet its cubic time and quadratic space confine it to datasets of moderate size. GRACE-A lifts both restrictions at once, reducing the time to $O\bigl(n p (m d_e + p)\bigr)$ and the space to $O\bigl(n (m d_e + p)\bigr)$, each linear in $n$ for a fixed anchor budget. The anchor design thereby carries the semantic metric of GRACE to corpora whose scale puts a dense $n \times n$ affinity out of reach, where storing it alone would exhaust commodity memory. As the experiments demonstrate the accuracy traded for the linear cost remains marginal, leaving GRACE-A with both the world-knowledge benefit of GRACE and the scalability to cluster tables far beyond the reach of dense spectral methods.

\begin{table}[t]
\centering
\renewcommand{\arraystretch}{0.8}
\small
\caption{\small{Statistics of the twenty-two datasets. $d_c$, $d_n$, and $n$ denote the numbers of categorical attributes, numerical attributes, and samples, respectively. $V$ is the total number of distinct values across all categorical attributes. $K$ equals the true number of clusters.}}
\label{tab:datasets}
\setlength{\tabcolsep}{3.5pt}
\begin{tabular}{c|c c|c c c c}
\toprule
No. & Dataset & Abbr. & $(d_c,\, d_n)$ & $n$ & $V$ & $K$ \\
\midrule
 1 & Lenses                & LE & $(4,\, 0)$   &      24 &   9 &  3 \\
\rowcolor{rowgray}
 2 & Caesarian Section     & CS & $(3,\, 2)$   &      80 &   8 &  2 \\
 3 & Zoo                   & ZO & $(16,\, 0)$  &     101 &  36 &  7 \\
\rowcolor{rowgray}
 4 & Autism Adolescent     & AA & $(18,\, 2)$  &     104 &  76 &  2 \\
 5 & Lymphography          & LY & $(18,\, 0)$  &     148 &  60 &  4 \\
\rowcolor{rowgray}
 6 & Teaching Assistant    & TA & $(4,\, 1)$   &     151 &  55 &  3 \\
 7 & Amphibians            & AM & $(12,\, 3)$  &     189 &  71 &  2 \\
\rowcolor{rowgray}
 8 & Soybean (Large)       & SO & $(35,\, 0)$  &     266 & 100 & 15 \\
 9 & SPECT Heart           & SH & $(22,\, 0)$  &     267 &  44 &  2 \\
\rowcolor{rowgray}
10 & Breast Cancer         & BC & $(9,\, 0)$   &     286 &  51 &  2 \\
11 & Heart Disease         & HD & $(8,\, 5)$   &     303 &  23 &  5 \\
\rowcolor{rowgray}
12 & Primary Tumor         & PT & $(17,\, 0)$  &     339 &  37 & 22 \\
13 & Dermatology           & DE & $(33,\, 1)$  &     366 & 130 &  6 \\
\rowcolor{rowgray}
14 & Chronic Kidney Disease & CK & $(13,\, 11)$ &     400 &  37 &  2 \\
15 & Congressional Voting  & CV & $(16,\, 0)$  &     435 &  32 &  2 \\
\rowcolor{rowgray}
16 & Statlog Australian    & SA & $(8,\, 6)$   &     690 &  37 &  2 \\
17 & Car Evaluation        & CA & $(6,\, 0)$  & 1{,}728 &  21 &  4 \\
\rowcolor{rowgray}
18 & Auction Verification  & AV & $(7,\, 1)$  & 2{,}043 &  54 &  2 \\
19 & Obesity Levels        & OL & $(8,\, 8)$   & 2{,}111 &  23 &  7 \\
\rowcolor{rowgray}
20 & Splice                & SP & $(60,\, 0)$  & 3{,}190 & 240 &  3 \\
21 & Mushroom              & MU & $(21,\, 0)$  & 8{,}124 & 115 &  2 \\
\rowcolor{rowgray}
22 & Adult                 & AD & $(8,\, 6)$   & 48{,}842 &  99 &  2 \\
\bottomrule
\end{tabular}
\end{table}

\begin{table*}[!t]
\scriptsize
\centering
\caption{\small{Clustering results on 11 categorical datasets and two reconstructed datasets (\textcolor{leakred}{in dark red}) for probing LLM label leakage, reported as mean $\pm$ standard deviation. \colorbox{best}{\textbf{Best}} and \colorbox{second}{second-best} results are highlighted (GRACE-A included). $\overline{AR}$ is the average rank over all 13 datasets; GRACE and GRACE-A are each ranked against the competitors only, not against each other. $^\dagger$ indicates statistical significance ($p<0.05$) against GRACE via the Wilcoxon signed-rank test.}}
\label{tab:pure_cat_results}
\resizebox{\textwidth}{!}{%
\renewcommand{\arraystretch}{0.8}
\begin{tabular}{c|c|cccccccccc|cc}
\toprule
\multirow{2}{*}{\textbf{Metric}} & \multirow{2}{*}{\textbf{Dataset}}
& KMO & ADC & GUDMM-S & MCDC & COForest & AMPHM & SigDT & HARR & DiSC & OCL & \textbf{GRACE} & \textbf{GRACE-A} \\
& & {\tiny[DMKD'98]} & {\tiny[TNNLS'23]} & {\tiny[PR'23]} & {\tiny[ICDCS'24]} & {\tiny[ECAI'24]} & {\tiny[CAIS'25]} & {\tiny[Inf.Sci.'25]} & {\tiny[ESWA'25]} & {\tiny[AAAI'26]} & {\tiny[SIGMOD'26]} & {\tiny\textbf{[Ours]}} & {\tiny\textbf{[Approx.]}} \\
\midrule
\multirow{11}{*}{\rotatebox[origin=c]{90}{\textbf{ARI}}}
 & LE & 0.043$_{\pm.06}$ & 0.061$_{\pm.10}$ & 0.130$_{\pm.09}$ & -0.076$_{\pm.00}$ & 0.119$_{\pm.11}$ & -0.013$_{\pm.00}$ & -0.043$_{\pm.00}$ & 0.125$_{\pm.16}$ & 0.065$_{\pm.18}$ & 0.137$_{\pm.10}$ & \cellcolor{second}0.163$_{\pm.17}$ & \cellcolor{best}\textbf{0.213$_{\pm.10}$} \\
 & ZO & 0.584$_{\pm.10}$ & 0.494$_{\pm.00}$ & 0.602$_{\pm.04}$ & 0.602$_{\pm.00}$ & 0.639$_{\pm.13}$ & 0.379$_{\pm.00}$ & 0.696$_{\pm.00}$ & \cellcolor{second}0.778$_{\pm.13}$ & 0.644$_{\pm.25}$ & 0.721$_{\pm.12}$ & \cellcolor{best}\textbf{0.827$_{\pm.00}$} & 0.660$_{\pm.00}$ \\
 & LY & 0.011$_{\pm.02}$ & 0.112$_{\pm.04}$ & 0.117$_{\pm.03}$ & -0.010$_{\pm.00}$ & 0.118$_{\pm.03}$ & 0.129$_{\pm.00}$ & 0.015$_{\pm.00}$ & 0.152$_{\pm.05}$ & 0.080$_{\pm.04}$ & 0.159$_{\pm.05}$ & \cellcolor{best}\textbf{0.215$_{\pm.01}$} & \cellcolor{second}0.161$_{\pm.00}$ \\
 & SO & 0.227$_{\pm.04}$ & 0.358$_{\pm.02}$ & 0.353$_{\pm.03}$ & 0.340$_{\pm.00}$ & 0.397$_{\pm.04}$ & 0.157$_{\pm.00}$ & 0.120$_{\pm.00}$ & 0.355$_{\pm.03}$ & 0.293$_{\pm.15}$ & 0.409$_{\pm.02}$ & \cellcolor{second}0.454$_{\pm.00}$ & \cellcolor{best}\textbf{0.456$_{\pm.00}$} \\
 & SH & -0.073$_{\pm.01}$ & -0.007$_{\pm.02}$ & -0.023$_{\pm.00}$ & -0.039$_{\pm.00}$ & -0.048$_{\pm.03}$ & 0.063$_{\pm.00}$ & 0.094$_{\pm.00}$ & 0.023$_{\pm.00}$ & \cellcolor{best}\textbf{0.312$_{\pm.00}$} & -0.088$_{\pm.00}$ & \cellcolor{second}0.294$_{\pm.00}$ & 0.143$_{\pm.00}$ \\
 & BC & 0.000$_{\pm.00}$ & 0.003$_{\pm.01}$ & 0.053$_{\pm.00}$ & -0.008$_{\pm.00}$ & 0.013$_{\pm.05}$ & 0.013$_{\pm.00}$ & 0.058$_{\pm.00}$ & 0.000$_{\pm.00}$ & 0.036$_{\pm.08}$ & -0.003$_{\pm.00}$ & \cellcolor{best}\textbf{0.156$_{\pm.00}$} & \cellcolor{second}0.147$_{\pm.00}$ \\
 & PT & 0.068$_{\pm.02}$ & 0.079$_{\pm.01}$ & 0.072$_{\pm.01}$ & 0.076$_{\pm.00}$ & \cellcolor{second}0.111$_{\pm.01}$ & 0.084$_{\pm.00}$ & 0.078$_{\pm.00}$ & 0.103$_{\pm.01}$ & 0.085$_{\pm.02}$ & 0.101$_{\pm.01}$ & \cellcolor{best}\textbf{0.115$_{\pm.00}$} & 0.110$_{\pm.00}$ \\
 & CV & 0.422$_{\pm.02}$ & 0.495$_{\pm.17}$ & 0.550$_{\pm.00}$ & 0.537$_{\pm.00}$ & 0.566$_{\pm.00}$ & 0.008$_{\pm.00}$ & 0.459$_{\pm.00}$ & 0.550$_{\pm.00}$ & 0.563$_{\pm.00}$ & \cellcolor{second}0.590$_{\pm.05}$ & \cellcolor{best}\textbf{0.628$_{\pm.00}$} & 0.573$_{\pm.00}$ \\
 & CA & 0.021$_{\pm.03}$ & 0.030$_{\pm.03}$ & 0.050$_{\pm.05}$ & 0.001$_{\pm.00}$ & 0.044$_{\pm.05}$ & \cellcolor{best}\textbf{0.194$_{\pm.00}$} & -0.041$_{\pm.00}$ & -0.002$_{\pm.08}$ & 0.032$_{\pm.08}$ & 0.048$_{\pm.04}$ & \cellcolor{second}0.131$_{\pm.00}$ & 0.087$_{\pm.00}$ \\
 & SP & 0.025$_{\pm.00}$ & 0.078$_{\pm.00}$ & 0.075$_{\pm.00}$ & 0.071$_{\pm.00}$ & 0.013$_{\pm.01}$ & -- & 0.386$_{\pm.00}$ & 0.107$_{\pm.08}$ & 0.000$_{\pm.00}$ & 0.078$_{\pm.01}$ & \cellcolor{second}0.396$_{\pm.00}$ & \cellcolor{best}\textbf{0.438$_{\pm.00}$} \\
 & MU & 0.373$_{\pm.05}$ & 0.344$_{\pm.20}$ & 0.273$_{\pm.00}$ & 0.066$_{\pm.01}$ & 0.111$_{\pm.10}$ & -- & 0.266$_{\pm.14}$ & 0.384$_{\pm.27}$ & 0.261$_{\pm.23}$ & 0.435$_{\pm.26}$ & \cellcolor{best}\textbf{0.606$_{\pm.01}$} & \cellcolor{second}0.581$_{\pm.00}$ \\
& \textcolor{leakred}{BC} & 0.020$_{\pm.00}$ & 0.002$_{\pm.00}$ & 0.054$_{\pm.00}$ & 0.003$_{\pm.00}$ & 0.006$_{\pm.00}$ & 0.013$_{\pm.00}$ & 0.011$_{\pm.00}$ & -0.004$_{\pm.00}$ & 0.047$_{\pm.00}$ & 0.028$_{\pm.00}$ & \cellcolor{best}\textbf{0.065$_{\pm.00}$} & \cellcolor{second}0.057$_{\pm.00}$ \\
 & \textcolor{leakred}{LY} & 0.029$_{\pm.00}$ & 0.051$_{\pm.00}$ & 0.045$_{\pm.00}$ & 0.025$_{\pm.00}$ & 0.029$_{\pm.00}$ & 0.038$_{\pm.00}$ & 0.052$_{\pm.00}$ & 0.020$_{\pm.00}$ & 0.050$_{\pm.00}$ & 0.025$_{\pm.00}$ & \cellcolor{best}\textbf{0.059$_{\pm.00}$} & \cellcolor{second}0.054$_{\pm.00}$ \\
\midrule
\multicolumn{2}{c|}{$\overline{AR}$} & 8.23$^\dagger$ & 6.54$^\dagger$ & 5.31$^\dagger$ & 9.00$^\dagger$ & 5.93$^\dagger$ & 6.73$^\dagger$ & 6.54$^\dagger$ & 5.77$^\dagger$ & 5.54$^\dagger$ & 4.61$^\dagger$ & \textbf{1.15} & \textbf{1.54} \\
\midrule
\multirow{11}{*}{\rotatebox[origin=c]{90}{\textbf{NMI}}}
 & LE & 0.165$_{\pm.06}$ & 0.232$_{\pm.11}$ & 0.210$_{\pm.10}$ & 0.200$_{\pm.00}$ & 0.287$_{\pm.15}$ & 0.129$_{\pm.00}$ & 0.027$_{\pm.00}$ & 0.183$_{\pm.16}$ & 0.226$_{\pm.15}$ & 0.287$_{\pm.10}$ & \cellcolor{best}\textbf{0.341$_{\pm.13}$} & \cellcolor{second}0.339$_{\pm.11}$ \\
 & ZO & 0.679$_{\pm.05}$ & 0.702$_{\pm.00}$ & 0.745$_{\pm.02}$ & 0.803$_{\pm.00}$ & 0.793$_{\pm.05}$ & 0.601$_{\pm.00}$ & 0.786$_{\pm.00}$ & \cellcolor{best}\textbf{0.869$_{\pm.04}$} & 0.736$_{\pm.20}$ & 0.810$_{\pm.04}$ & \cellcolor{second}0.854$_{\pm.00}$ & 0.819$_{\pm.00}$ \\
 & LY & 0.117$_{\pm.03}$ & 0.156$_{\pm.02}$ & 0.212$_{\pm.02}$ & 0.108$_{\pm.00}$ & 0.161$_{\pm.03}$ & \cellcolor{best}\textbf{0.330$_{\pm.00}$} & 0.141$_{\pm.00}$ & 0.174$_{\pm.05}$ & 0.173$_{\pm.05}$ & 0.196$_{\pm.05}$ & \cellcolor{second}0.253$_{\pm.00}$ & 0.218$_{\pm.00}$ \\
 & SO & 0.518$_{\pm.03}$ & 0.638$_{\pm.01}$ & 0.644$_{\pm.02}$ & 0.634$_{\pm.00}$ & 0.697$_{\pm.02}$ & 0.562$_{\pm.00}$ & 0.538$_{\pm.00}$ & 0.642$_{\pm.02}$ & 0.569$_{\pm.28}$ & 0.704$_{\pm.02}$ & \cellcolor{best}\textbf{0.756$_{\pm.00}$} & \cellcolor{second}0.731$_{\pm.00}$ \\
 & SH & 0.059$_{\pm.00}$ & 0.011$_{\pm.03}$ & 0.068$_{\pm.00}$ & 0.062$_{\pm.00}$ & 0.069$_{\pm.02}$ & 0.061$_{\pm.00}$ & 0.119$_{\pm.00}$ & 0.115$_{\pm.00}$ & \cellcolor{best}\textbf{0.185$_{\pm.00}$} & 0.094$_{\pm.00}$ & \cellcolor{second}0.144$_{\pm.00}$ & 0.135$_{\pm.00}$ \\
 & BC & 0.002$_{\pm.00}$ & 0.005$_{\pm.00}$ & 0.041$_{\pm.00}$ & 0.001$_{\pm.00}$ & 0.009$_{\pm.02}$ & 0.030$_{\pm.00}$ & 0.053$_{\pm.00}$ & 0.000$_{\pm.00}$ & 0.033$_{\pm.04}$ & 0.001$_{\pm.00}$ & \cellcolor{best}\textbf{0.079$_{\pm.00}$} & \cellcolor{second}0.076$_{\pm.00}$ \\
 & PT & 0.250$_{\pm.01}$ & 0.263$_{\pm.02}$ & 0.304$_{\pm.01}$ & 0.322$_{\pm.00}$ & 0.360$_{\pm.01}$ & 0.308$_{\pm.00}$ & 0.304$_{\pm.00}$ & 0.351$_{\pm.01}$ & 0.345$_{\pm.02}$ & 0.354$_{\pm.01}$ & \cellcolor{best}\textbf{0.370$_{\pm.00}$} & \cellcolor{second}0.364$_{\pm.00}$ \\
 & CV & 0.438$_{\pm.00}$ & 0.436$_{\pm.15}$ & 0.444$_{\pm.00}$ & 0.475$_{\pm.00}$ & 0.491$_{\pm.00}$ & 0.006$_{\pm.00}$ & \cellcolor{second}0.526$_{\pm.00}$ & 0.479$_{\pm.00}$ & 0.446$_{\pm.00}$ & 0.503$_{\pm.05}$ & \cellcolor{best}\textbf{0.538$_{\pm.00}$} & 0.483$_{\pm.00}$ \\
 & CA & 0.041$_{\pm.02}$ & 0.049$_{\pm.03}$ & 0.054$_{\pm.07}$ & 0.003$_{\pm.00}$ & 0.111$_{\pm.06}$ & 0.124$_{\pm.00}$ & 0.034$_{\pm.00}$ & 0.065$_{\pm.06}$ & 0.084$_{\pm.07}$ & 0.104$_{\pm.04}$ & \cellcolor{best}\textbf{0.188$_{\pm.00}$} & \cellcolor{second}0.133$_{\pm.00}$ \\
 & SP & 0.050$_{\pm.00}$ & 0.091$_{\pm.00}$ & 0.076$_{\pm.00}$ & 0.082$_{\pm.00}$ & 0.015$_{\pm.01}$ & -- & 0.329$_{\pm.00}$ & 0.115$_{\pm.09}$ & 0.000$_{\pm.00}$ & 0.128$_{\pm.03}$ & \cellcolor{best}\textbf{0.404$_{\pm.00}$} & \cellcolor{second}0.399$_{\pm.00}$ \\
 & MU & 0.329$_{\pm.02}$ & 0.361$_{\pm.17}$ & 0.303$_{\pm.00}$ & 0.052$_{\pm.01}$ & 0.098$_{\pm.10}$ & -- & 0.332$_{\pm.11}$ & 0.323$_{\pm.23}$ & 0.272$_{\pm.19}$ & 0.377$_{\pm.23}$ & \cellcolor{best}\textbf{0.554$_{\pm.01}$} & \cellcolor{second}0.553$_{\pm.00}$ \\
& \textcolor{leakred}{BC} & 0.110$_{\pm.00}$ & 0.062$_{\pm.00}$ & 0.138$_{\pm.00}$ & 0.099$_{\pm.00}$ & 0.111$_{\pm.00}$ & 0.090$_{\pm.00}$ & 0.136$_{\pm.00}$ & 0.080$_{\pm.00}$ & 0.129$_{\pm.00}$ & 0.144$_{\pm.00}$ & \cellcolor{second}0.149$_{\pm.00}$ & \cellcolor{best}\textbf{0.155$_{\pm.00}$} \\
 & \textcolor{leakred}{LY} & 0.179$_{\pm.00}$ & \cellcolor{best}\textbf{0.234$_{\pm.00}$} & \cellcolor{second}0.227$_{\pm.00}$ & 0.191$_{\pm.00}$ & 0.206$_{\pm.00}$ & 0.172$_{\pm.00}$ & 0.209$_{\pm.00}$ & 0.185$_{\pm.00}$ & 0.213$_{\pm.00}$ & 0.186$_{\pm.00}$ & 0.226$_{\pm.00}$ & \cellcolor{second}0.227$_{\pm.00}$ \\
\midrule
\multicolumn{2}{c|}{$\overline{AR}$} & 9.00$^\dagger$ & 7.15$^\dagger$ & 5.54$^\dagger$ & 7.69$^\dagger$ & 5.23$^\dagger$ & 7.73$^\dagger$ & 5.92$^\dagger$ & 6.00$^\dagger$ & 5.85$^\dagger$ & 4.00$^\dagger$ & \textbf{1.38} & \textbf{1.58} \\
\midrule
\multirow{11}{*}{\rotatebox[origin=c]{90}{\textbf{ACC}}}
 & LE & 0.517$_{\pm.07}$ & 0.529$_{\pm.06}$ & 0.545$_{\pm.09}$ & 0.417$_{\pm.00}$ & 0.554$_{\pm.09}$ & 0.458$_{\pm.00}$ & 0.417$_{\pm.00}$ & 0.567$_{\pm.10}$ & 0.500$_{\pm.12}$ & 0.558$_{\pm.10}$ & \cellcolor{second}0.575$_{\pm.11}$ & \cellcolor{best}\textbf{0.600$_{\pm.11}$} \\
 & ZO & 0.655$_{\pm.09}$ & 0.545$_{\pm.00}$ & 0.526$_{\pm.01}$ & 0.673$_{\pm.00}$ & 0.702$_{\pm.09}$ & 0.554$_{\pm.00}$ & \cellcolor{second}0.802$_{\pm.00}$ & 0.788$_{\pm.11}$ & 0.717$_{\pm.15}$ & 0.746$_{\pm.10}$ & \cellcolor{best}\textbf{0.822$_{\pm.00}$} & 0.756$_{\pm.00}$ \\
 & LY & 0.410$_{\pm.03}$ & 0.433$_{\pm.04}$ & 0.443$_{\pm.03}$ & 0.446$_{\pm.00}$ & 0.508$_{\pm.04}$ & \cellcolor{best}\textbf{0.561$_{\pm.00}$} & 0.493$_{\pm.00}$ & 0.487$_{\pm.04}$ & 0.470$_{\pm.07}$ & 0.513$_{\pm.04}$ & \cellcolor{second}0.546$_{\pm.00}$ & 0.509$_{\pm.00}$ \\
 & SO & 0.424$_{\pm.06}$ & 0.483$_{\pm.02}$ & 0.452$_{\pm.02}$ & 0.564$_{\pm.00}$ & 0.602$_{\pm.04}$ & 0.436$_{\pm.00}$ & 0.323$_{\pm.00}$ & 0.510$_{\pm.03}$ & 0.473$_{\pm.16}$ & 0.595$_{\pm.04}$ & \cellcolor{best}\textbf{0.658$_{\pm.00}$} & \cellcolor{second}0.616$_{\pm.00}$ \\
 & SH & 0.430$_{\pm.02}$ & 0.766$_{\pm.08}$ & 0.547$_{\pm.00}$ & 0.502$_{\pm.00}$ & 0.544$_{\pm.02}$ & 0.629$_{\pm.00}$ & 0.397$_{\pm.00}$ & 0.592$_{\pm.00}$ & \cellcolor{best}\textbf{0.843$_{\pm.00}$} & 0.559$_{\pm.00}$ & \cellcolor{second}0.824$_{\pm.00}$ & 0.793$_{\pm.00}$ \\
 & BC & 0.516$_{\pm.02}$ & 0.544$_{\pm.01}$ & 0.575$_{\pm.00}$ & 0.531$_{\pm.00}$ & 0.531$_{\pm.07}$ & 0.567$_{\pm.00}$ & 0.523$_{\pm.00}$ & \cellcolor{second}0.708$_{\pm.00}$ & 0.609$_{\pm.08}$ & 0.509$_{\pm.00}$ & \cellcolor{best}\textbf{0.711$_{\pm.00}$} & 0.704$_{\pm.00}$ \\
 & PT & 0.191$_{\pm.02}$ & 0.280$_{\pm.02}$ & 0.287$_{\pm.01}$ & 0.265$_{\pm.00}$ & \cellcolor{best}\textbf{0.306$_{\pm.02}$} & 0.286$_{\pm.00}$ & 0.256$_{\pm.00}$ & 0.287$_{\pm.02}$ & 0.279$_{\pm.02}$ & \cellcolor{second}0.299$_{\pm.02}$ & 0.293$_{\pm.00}$ & 0.287$_{\pm.00}$ \\
 & CV & 0.762$_{\pm.01}$ & 0.846$_{\pm.08}$ & 0.871$_{\pm.00}$ & 0.867$_{\pm.00}$ & 0.877$_{\pm.00}$ & 0.618$_{\pm.00}$ & 0.657$_{\pm.00}$ & 0.871$_{\pm.00}$ & 0.876$_{\pm.00}$ & \cellcolor{second}0.884$_{\pm.02}$ & \cellcolor{best}\textbf{0.897$_{\pm.00}$} & 0.879$_{\pm.00}$ \\
 & CA & 0.262$_{\pm.03}$ & 0.393$_{\pm.05}$ & 0.375$_{\pm.04}$ & 0.270$_{\pm.00}$ & 0.383$_{\pm.05}$ & 0.514$_{\pm.00}$ & \cellcolor{second}0.554$_{\pm.00}$ & \cellcolor{best}\textbf{0.600$_{\pm.08}$} & 0.544$_{\pm.12}$ & 0.372$_{\pm.04}$ & 0.487$_{\pm.00}$ & 0.398$_{\pm.00}$ \\
 & SP & 0.377$_{\pm.00}$ & 0.513$_{\pm.00}$ & 0.510$_{\pm.00}$ & 0.556$_{\pm.00}$ & 0.392$_{\pm.02}$ & -- & 0.687$_{\pm.00}$ & 0.554$_{\pm.05}$ & 0.519$_{\pm.00}$ & 0.474$_{\pm.01}$ & \cellcolor{second}0.748$_{\pm.00}$ & \cellcolor{best}\textbf{0.776$_{\pm.00}$} \\
 & MU & 0.678$_{\pm.02}$ & 0.782$_{\pm.08}$ & 0.678$_{\pm.00}$ & 0.628$_{\pm.01}$ & 0.650$_{\pm.07}$ & -- & 0.551$_{\pm.04}$ & 0.759$_{\pm.17}$ & 0.732$_{\pm.11}$ & 0.806$_{\pm.12}$ & \cellcolor{best}\textbf{0.889$_{\pm.00}$} & \cellcolor{second}0.813$_{\pm.00}$ \\
& \textcolor{leakred}{BC} & 0.206$_{\pm.00}$ & 0.191$_{\pm.00}$ & 0.219$_{\pm.00}$ & 0.188$_{\pm.00}$ & 0.217$_{\pm.00}$ & 0.206$_{\pm.00}$ & 0.213$_{\pm.00}$ & 0.191$_{\pm.00}$ & 0.227$_{\pm.00}$ & 0.217$_{\pm.00}$ & \cellcolor{best}\textbf{0.246$_{\pm.00}$} & \cellcolor{second}0.238$_{\pm.00}$ \\
 & \textcolor{leakred}{LY} & 0.247$_{\pm.00}$ & 0.251$_{\pm.00}$ & 0.254$_{\pm.00}$ & 0.243$_{\pm.00}$ & 0.257$_{\pm.00}$ & 0.257$_{\pm.00}$ & 0.267$_{\pm.00}$ & 0.250$_{\pm.00}$ & \cellcolor{best}\textbf{0.277$_{\pm.00}$} & 0.250$_{\pm.00}$ & \cellcolor{second}0.270$_{\pm.00}$ & \cellcolor{second}0.270$_{\pm.00}$ \\
\midrule
\multicolumn{2}{c|}{$\overline{AR}$} & 9.31$^\dagger$ & 6.73$^\dagger$ & 6.50$^\dagger$ & 8.16$^\dagger$ & 5.30$^\dagger$ & 6.36$^\dagger$ & 7.08$^\dagger$ & 4.69$^\dagger$ & 4.54$^\dagger$ & 5.08$^\dagger$ & \textbf{1.69} & \textbf{2.15} \\
\bottomrule
\end{tabular}%
}
\end{table*}

\begin{table*}[!t]
\centering
\caption{Clustering results on 11 mixed datasets. All notations are consistent with those defined in \hyperref[tab:pure_cat_results]{Table~\ref{tab:pure_cat_results}}.}
\label{tab:mixed_results}
\renewcommand{\arraystretch}{0.8}
\begin{tabular}{c|c|ccccc|cc}
\toprule
\multirow{2}{*}{\textbf{Metric}} & \multirow{2}{*}{\textbf{Dataset}} & KPR & ADC & GUDMM-S & AMPHM & HARR & \textbf{GRACE} & \textbf{GRACE-A} \\
& & {\small[PAKDD'97]} & {\small[TNNLS'23]} & {\small[PR'23]} & {\small[CAIS'25]} & {\small[ESWA'25]} & {\small\textbf{[Ours]}} & {\small\textbf{[Approx.]}} \\
\midrule
\multirow{11}{*}{\rotatebox[origin=c]{90}{\textbf{ARI}}}
 & CS & 0.003$_{\pm.00}$ & -0.003$_{\pm.01}$ & 0.010$_{\pm.00}$ & 0.050$_{\pm.00}$ & 0.044$_{\pm.04}$ & \cellcolor{best}\textbf{0.094$_{\pm.00}$} & \cellcolor{second}0.078$_{\pm.00}$ \\
 & AA & 0.478$_{\pm.02}$ & 0.029$_{\pm.02}$ & 0.509$_{\pm.00}$ & 0.040$_{\pm.00}$ & 0.131$_{\pm.23}$ & \cellcolor{second}0.528$_{\pm.00}$ & \cellcolor{best}\textbf{0.588$_{\pm.00}$} \\
 & TA & 0.004$_{\pm.01}$ & 0.012$_{\pm.01}$ & 0.016$_{\pm.00}$ & 0.016$_{\pm.00}$ & -0.008$_{\pm.01}$ & \cellcolor{best}\textbf{0.065$_{\pm.00}$} & \cellcolor{second}0.052$_{\pm.00}$ \\
 & AM & -0.005$_{\pm.00}$ & 0.008$_{\pm.00}$ & 0.067$_{\pm.00}$ & 0.010$_{\pm.00}$ & 0.001$_{\pm.01}$ & \cellcolor{best}\textbf{0.109$_{\pm.00}$} & \cellcolor{second}0.090$_{\pm.00}$ \\
 & HD & 0.019$_{\pm.00}$ & 0.229$_{\pm.00}$ & 0.372$_{\pm.00}$ & 0.242$_{\pm.00}$ & 0.337$_{\pm.00}$ & \cellcolor{best}\textbf{0.394$_{\pm.00}$} & \cellcolor{second}0.390$_{\pm.00}$ \\
 & DE & 0.101$_{\pm.01}$ & 0.351$_{\pm.08}$ & 0.705$_{\pm.00}$ & 0.679$_{\pm.00}$ & 0.805$_{\pm.08}$ & \cellcolor{best}\textbf{0.921$_{\pm.00}$} & \cellcolor{second}0.914$_{\pm.00}$ \\
 & CK & 0.112$_{\pm.00}$ & 0.664$_{\pm.00}$ & 0.510$_{\pm.00}$ & 0.567$_{\pm.00}$ & 0.626$_{\pm.00}$ & \cellcolor{best}\textbf{0.895$_{\pm.00}$} & \cellcolor{second}0.815$_{\pm.00}$ \\
 & SA & 0.004$_{\pm.00}$ & 0.077$_{\pm.01}$ & 0.347$_{\pm.00}$ & -0.009$_{\pm.00}$ & \cellcolor{best}\textbf{0.417$_{\pm.01}$} & \cellcolor{second}0.387$_{\pm.00}$ & 0.363$_{\pm.00}$ \\
 & AV & -0.100$_{\pm.00}$ & \cellcolor{second}0.004$_{\pm.01}$ & 0.003$_{\pm.00}$ & -0.047$_{\pm.00}$ & -0.009$_{\pm.05}$ & \cellcolor{best}\textbf{0.053$_{\pm.00}$} & -0.002$_{\pm.00}$ \\
 & OL & 0.214$_{\pm.00}$ & 0.079$_{\pm.01}$ & 0.231$_{\pm.00}$ & 0.163$_{\pm.00}$ & 0.090$_{\pm.02}$ & \cellcolor{second}0.255$_{\pm.00}$ & \cellcolor{best}\textbf{0.264$_{\pm.00}$} \\
 & AD & -0.012$_{\pm.00}$ & 0.084$_{\pm.00}$ & 0.086$_{\pm.01}$ & -- & \cellcolor{second}0.100$_{\pm.00}$ & 0.093$_{\pm.03}$ & \cellcolor{best}\textbf{0.156$_{\pm.01}$} \\
\midrule
\multicolumn{2}{c|}{$\overline{AR}$} & 5.09$^\dagger$ & 4.36$^\dagger$ & 2.82$^\dagger$ & 4.00$^\dagger$ & 3.36$^\dagger$ & \textbf{1.18} & \textbf{1.27} \\
\midrule
\multirow{11}{*}{\rotatebox[origin=c]{90}{\textbf{NMI}}}
 & CS & 0.018$_{\pm.00}$ & 0.006$_{\pm.01}$ & 0.017$_{\pm.00}$ & 0.077$_{\pm.00}$ & 0.053$_{\pm.03}$ & \cellcolor{best}\textbf{0.108$_{\pm.00}$} & \cellcolor{second}0.098$_{\pm.00}$ \\
 & AA & 0.347$_{\pm.01}$ & 0.019$_{\pm.01}$ & 0.432$_{\pm.00}$ & 0.058$_{\pm.00}$ & 0.122$_{\pm.22}$ & \cellcolor{second}0.460$_{\pm.00}$ & \cellcolor{best}\textbf{0.492$_{\pm.00}$} \\
 & TA & 0.014$_{\pm.00}$ & 0.021$_{\pm.01}$ & 0.046$_{\pm.00}$ & \cellcolor{best}\textbf{0.084$_{\pm.00}$} & 0.004$_{\pm.01}$ & \cellcolor{second}0.074$_{\pm.00}$ & 0.065$_{\pm.00}$ \\
 & AM & 0.016$_{\pm.00}$ & 0.004$_{\pm.00}$ & 0.079$_{\pm.00}$ & 0.005$_{\pm.00}$ & 0.002$_{\pm.00}$ & \cellcolor{best}\textbf{0.113$_{\pm.00}$} & \cellcolor{second}0.100$_{\pm.00}$ \\
 & HD & 0.014$_{\pm.00}$ & 0.174$_{\pm.00}$ & 0.294$_{\pm.01}$ & 0.189$_{\pm.00}$ & 0.262$_{\pm.00}$ & \cellcolor{second}0.306$_{\pm.00}$ & \cellcolor{best}\textbf{0.308$_{\pm.00}$} \\
 & DE & 0.218$_{\pm.02}$ & 0.403$_{\pm.08}$ & 0.749$_{\pm.00}$ & 0.740$_{\pm.00}$ & 0.822$_{\pm.06}$ & \cellcolor{best}\textbf{0.913$_{\pm.00}$} & \cellcolor{second}0.904$_{\pm.00}$ \\
 & CK & 0.246$_{\pm.00}$ & 0.639$_{\pm.00}$ & 0.528$_{\pm.00}$ & 0.568$_{\pm.00}$ & 0.611$_{\pm.00}$ & \cellcolor{best}\textbf{0.848$_{\pm.00}$} & \cellcolor{second}0.767$_{\pm.00}$ \\
 & SA & 0.016$_{\pm.00}$ & 0.059$_{\pm.00}$ & 0.288$_{\pm.00}$ & 0.040$_{\pm.00}$ & \cellcolor{best}\textbf{0.323$_{\pm.01}$} & \cellcolor{second}0.300$_{\pm.00}$ & 0.278$_{\pm.00}$ \\
 & AV & 0.003$_{\pm.00}$ & 0.004$_{\pm.00}$ & 0.003$_{\pm.00}$ & \cellcolor{second}0.009$_{\pm.00}$ & \cellcolor{best}\textbf{0.020$_{\pm.03}$} & 0.006$_{\pm.00}$ & 0.000$_{\pm.00}$ \\
 & OL & 0.281$_{\pm.00}$ & 0.135$_{\pm.02}$ & 0.338$_{\pm.00}$ & 0.249$_{\pm.00}$ & 0.146$_{\pm.02}$ & \cellcolor{second}0.346$_{\pm.00}$ & \cellcolor{best}\textbf{0.354$_{\pm.00}$} \\
 & AD & 0.001$_{\pm.00}$ & 0.077$_{\pm.00}$ & 0.057$_{\pm.01}$ & -- & \cellcolor{second}0.082$_{\pm.00}$ & 0.064$_{\pm.05}$ & \cellcolor{best}\textbf{0.125$_{\pm.01}$} \\
\midrule
\multicolumn{2}{c|}{$\overline{AR}$} & 4.73$^\dagger$ & 4.45$^\dagger$ & 3.36$^\dagger$ & 3.50$^\dagger$ & 3.18$^\dagger$ & \textbf{1.55} & \textbf{1.73} \\
\midrule
\multirow{11}{*}{\rotatebox[origin=c]{90}{\textbf{ACC}}}
 & CS & 0.562$_{\pm.00}$ & 0.540$_{\pm.03}$ & 0.575$_{\pm.00}$ & 0.625$_{\pm.00}$ & 0.606$_{\pm.05}$ & \cellcolor{best}\textbf{0.662$_{\pm.00}$} & \cellcolor{second}0.650$_{\pm.00}$ \\
 & AA & 0.784$_{\pm.01}$ & 0.592$_{\pm.04}$ & 0.828$_{\pm.00}$ & 0.612$_{\pm.00}$ & 0.631$_{\pm.14}$ & \cellcolor{second}0.867$_{\pm.00}$ & \cellcolor{best}\textbf{0.885$_{\pm.00}$} \\
 & TA & 0.383$_{\pm.01}$ & 0.426$_{\pm.02}$ & 0.404$_{\pm.00}$ & 0.384$_{\pm.00}$ & 0.352$_{\pm.02}$ & \cellcolor{best}\textbf{0.470$_{\pm.00}$} & \cellcolor{second}0.459$_{\pm.00}$ \\
 & AM & 0.561$_{\pm.00}$ & 0.566$_{\pm.00}$ & 0.635$_{\pm.00}$ & 0.571$_{\pm.00}$ & 0.554$_{\pm.01}$ & \cellcolor{best}\textbf{0.669$_{\pm.00}$} & \cellcolor{second}0.654$_{\pm.00}$ \\
 & HD & 0.576$_{\pm.00}$ & 0.741$_{\pm.00}$ & 0.806$_{\pm.00}$ & 0.748$_{\pm.00}$ & 0.791$_{\pm.00}$ & \cellcolor{best}\textbf{0.815$_{\pm.00}$} & \cellcolor{second}0.813$_{\pm.00}$ \\
 & DE & 0.363$_{\pm.02}$ & 0.494$_{\pm.08}$ & 0.746$_{\pm.00}$ & 0.768$_{\pm.00}$ & 0.832$_{\pm.07}$ & \cellcolor{best}\textbf{0.953$_{\pm.00}$} & \cellcolor{second}0.947$_{\pm.00}$ \\
 & CK & 0.669$_{\pm.00}$ & 0.908$_{\pm.00}$ & 0.858$_{\pm.00}$ & 0.877$_{\pm.00}$ & 0.896$_{\pm.00}$ & \cellcolor{best}\textbf{0.973$_{\pm.00}$} & \cellcolor{second}0.952$_{\pm.00}$ \\
 & SA & 0.562$_{\pm.00}$ & 0.642$_{\pm.01}$ & 0.792$_{\pm.00}$ & 0.520$_{\pm.00}$ & \cellcolor{best}\textbf{0.823$_{\pm.00}$} & \cellcolor{second}0.812$_{\pm.00}$ & 0.801$_{\pm.00}$ \\
 & AV & 0.660$_{\pm.00}$ & 0.552$_{\pm.03}$ & 0.527$_{\pm.00}$ & \cellcolor{best}\textbf{0.814$_{\pm.00}$} & 0.582$_{\pm.05}$ & \cellcolor{second}0.801$_{\pm.00}$ & 0.562$_{\pm.00}$ \\
 & OL & 0.324$_{\pm.00}$ & 0.274$_{\pm.01}$ & \cellcolor{second}0.429$_{\pm.00}$ & 0.390$_{\pm.00}$ & 0.283$_{\pm.02}$ & \cellcolor{best}\textbf{0.437$_{\pm.00}$} & 0.423$_{\pm.00}$ \\
 & AD & 0.613$_{\pm.01}$ & 0.654$_{\pm.00}$ & 0.690$_{\pm.01}$ & -- & 0.660$_{\pm.00}$ & \cellcolor{best}\textbf{0.739$_{\pm.04}$} & \cellcolor{second}0.701$_{\pm.00}$ \\
\midrule
\multicolumn{2}{c|}{$\overline{AR}$} & 4.82$^\dagger$ & 4.45$^\dagger$ & 3.18$^\dagger$ & 3.50$^\dagger$ & 3.64$^\dagger$ & \textbf{1.18} & \textbf{1.45} \\
\bottomrule
\end{tabular}%
\end{table*}

\begin{table}[!t]
\scriptsize
\centering
\caption{ARI of GRACE and GRACE-A versus LLM-based clustering methods on 22 datasets. All notations are consistent with those defined in \hyperref[tab:pure_cat_results]{Table~\ref{tab:pure_cat_results}}.}
\label{tab:llm_clustering_ari}
\resizebox{\columnwidth}{!}{%
\renewcommand{\arraystretch}{0.85}
\setlength{\tabcolsep}{4pt}
\begin{tabular}{c|ccccc|cc}
\toprule
\textbf{Dataset} & TabLLM & ClusterLLM & FewShot & GenericDesc. & BREVE & \textbf{GRACE} & \textbf{GRACE-A} \\
\midrule
LE & 0.026 & 0.000 & -0.043 & 0.094 & \cellcolor{best}\textbf{0.217} & 0.163 & \cellcolor{second}0.213 \\
CS & 0.003 & -0.014 & 0.064 & 0.064 & -- & \cellcolor{best}\textbf{0.094} & \cellcolor{second}0.078 \\
ZO & 0.631 & 0.000 & \cellcolor{second}0.805 & 0.447 & 0.774 & \cellcolor{best}\textbf{0.827} & 0.660 \\
AA & -0.004 & 0.074 & 0.024 & 0.466 & -- & \cellcolor{second}0.528 & \cellcolor{best}\textbf{0.588} \\
LY & 0.072 & 0.000 & 0.061 & 0.020 & \cellcolor{second}0.207 & \cellcolor{best}\textbf{0.215} & 0.161 \\
TA & 0.045 & 0.006 & 0.039 & -0.011 & -- & \cellcolor{best}\textbf{0.065} & \cellcolor{second}0.052 \\
AM & 0.022 & \cellcolor{second}0.110 & 0.022 & \cellcolor{best}\textbf{0.129} & -- & 0.109 & 0.090 \\
SO & 0.173 & 0.000 & 0.139 & 0.150 & 0.428 & \cellcolor{second}0.454 & \cellcolor{best}\textbf{0.456} \\
SH & -0.016 & 0.000 & -0.013 & -0.010 & -0.022 & \cellcolor{best}\textbf{0.294} & \cellcolor{second}0.143 \\
BC & 0.117 & 0.008 & 0.008 & -0.002 & \cellcolor{best}\textbf{0.169} & \cellcolor{second}0.156 & 0.147 \\
HD & 0.174 & -0.004 & 0.002 & 0.306 & -- & \cellcolor{best}\textbf{0.394} & \cellcolor{second}0.390 \\
PT & 0.056 & 0.000 & 0.058 & 0.036 & 0.103 & \cellcolor{best}\textbf{0.115} & \cellcolor{second}0.110 \\
DE & 0.159 & -0.005 & 0.104 & 0.314 & -- & \cellcolor{best}\textbf{0.921} & \cellcolor{second}0.914 \\
CK & 0.396 & 0.672 & 0.101 & 0.639 & -- & \cellcolor{best}\textbf{0.895} & \cellcolor{second}0.815 \\
CV & 0.006 & 0.000 & \cellcolor{second}0.606 & 0.061 & 0.578 & \cellcolor{best}\textbf{0.628} & 0.573 \\
SA & 0.069 & -0.004 & 0.000 & 0.193 & -- & \cellcolor{best}\textbf{0.387} & \cellcolor{second}0.363 \\
CA & 0.009 & 0.011 & \cellcolor{best}\textbf{0.169} & 0.007 & 0.085 & \cellcolor{second}0.131 & 0.087 \\
AV & 0.014 & \cellcolor{best}\textbf{0.286} & -0.002 & 0.003 & -- & \cellcolor{second}0.053 & -0.002 \\
OL & 0.222 & 0.131 & 0.142 & 0.130 & -- & \cellcolor{second}0.255 & \cellcolor{best}\textbf{0.264} \\
SP & 0.030 & 0.000 & 0.006 & 0.033 & 0.353 & \cellcolor{second}0.396 & \cellcolor{best}\textbf{0.438} \\
MU & 0.081 & 0.000 & \cellcolor{best}\textbf{0.648} & 0.093 & 0.593 & \cellcolor{second}0.606 & 0.581 \\
AD & \cellcolor{second}0.104 & 0.026 & 0.032 & 0.094 & -- & 0.093 & \cellcolor{best}\textbf{0.156} \\
\midrule
$\overline{AR}$ & 3.61$^\dagger$ & 4.48$^\dagger$ & 3.66$^\dagger$ & 3.57$^\dagger$ & 2.55$^\dagger$ & \textbf{1.41} & \textbf{1.73} \\
\bottomrule
\end{tabular}%
}
\end{table}

\section{Experiments}

\subsection{Experimental Settings}
\label{sec:settings}

\textbf{Four experimental studies and one case study} are conducted to evaluate GRACE from complementary perspectives. \textbf{1) Clustering effectiveness} (\hyperref[sec:performance]{\S\ref{sec:performance}}) compares GRACE with eleven representative counterparts on twenty UCI datasets and reports Wilcoxon signed-rank tests to assess statistical reliability. \textbf{2) Component analysis} (\hyperref[sec:ablation]{\S\ref{sec:ablation}}) isolates the contribution of semantic representation and dual-view neighborhood consistency. \textbf{3) Generality and robustness} (\hyperref[sec:representation]{\S\ref{sec:representation}}, \hyperref[sec:robustness]{\S\ref{sec:robustness}}) examines whether the learned representation remains effective across clustering algorithms and whether the results are stable across different LLM backends. \textbf{4) Scalability evaluation} (\hyperref[sec:scalability]{\S\ref{sec:scalability}}) studies runtime and clustering quality under increasing sample size, attribute cardinality, and cluster count. \textbf{5) Case study} (\hyperref[sec:case]{\S\ref{sec:case}}) analyzes the learned semantic metric on the CK dataset to illustrate how GRACE improves cluster separation. All methods run on an NVIDIA RTX 4090, except the scalability study, which is timed on a single commodity RTX 3060 (6GB) to provide an independent efficiency measurement.

\begin{table*}[t]
\small
\centering
\caption{Ablation study over Semantic Representation (SR) and Dual-View Neighborhood Consistency (DVNC).}
\label{tab:ablation}
\resizebox{\textwidth}{!}{%
\renewcommand{\arraystretch}{0.8}
\setlength{\tabcolsep}{2pt}
\begin{tabular}{c|cc|c|cccccccccccccccccccc|c}
\toprule
\multirow{2}{*}{\textbf{Metric}} & \multicolumn{2}{c|}{\scriptsize{\textbf{Components}}} & \multirow{2}{*}{\textbf{Mark}} & \multicolumn{20}{c|}{\textbf{Datasets}} & \multirow{2}{*}{$\overline{AR}$} \\
\cmidrule{2-3} \cmidrule{5-24}
 & \textbf{SR} & \textbf{DVNC} & & LE & CS & ZO & AA & LY & TA & AM & SO & SH & BC & HD & PT & DE & CK & CV & SA & CA & AV & OL & SP & \\
\midrule
 \multirow{3}{*}{\rotatebox[origin=c]{90}{\textbf{ARI}}} &  &  & I & 0.064 & 0.094 & 0.807 & 0.366 & 0.192 & 0.019 & 0.068 & 0.432 & 0.068 & \textbf{0.173} & 0.370 & 0.099 & 0.778 & 0.755 & 0.585 & 0.354 & 0.000 & 0.049 & \textbf{0.290} & 0.000 & 2.65 \\
  & $\checkmark$ &  & II & 0.064 & \textbf{0.111} & 0.656 & \textbf{0.529} & \textbf{0.238} & \textbf{0.065} & 0.086 & \textbf{0.456} & 0.068 & 0.151 & \textbf{0.420} & 0.099 & \textbf{0.921} & 0.768 & 0.599 & \textbf{0.387} & \textbf{0.131} & \textbf{0.053} & 0.241 & 0.379 & 1.85 \\
  & $\checkmark$ & $\checkmark$ & \textbf{Full} & \textbf{0.130} & 0.094 & \textbf{0.827} & 0.528 & 0.217 & \textbf{0.065} & \textbf{0.108} & 0.454 & \textbf{0.294} & 0.156 & 0.394 & \textbf{0.116} & \textbf{0.921} & \textbf{0.895} & \textbf{0.628} & \textbf{0.387} & \textbf{0.131} & \textbf{0.053} & 0.255 & \textbf{0.396} & \textbf{1.50} \\
\midrule
 \multirow{3}{*}{\rotatebox[origin=c]{90}{\textbf{NMI}}} &  &  & I & 0.137 & 0.078 & 0.803 & 0.264 & 0.243 & 0.026 & 0.066 & 0.725 & 0.119 & \textbf{0.089} & 0.280 & 0.353 & 0.851 & 0.713 & 0.505 & 0.280 & 0.003 & \textbf{0.034} & 0.354 & 0.001 & 2.65 \\
  & $\checkmark$ &  & II & 0.137 & \textbf{0.131} & 0.794 & 0.437 & \textbf{0.275} & \textbf{0.074} & 0.088 & 0.748 & 0.119 & 0.076 & \textbf{0.329} & 0.354 & \textbf{0.913} & 0.725 & 0.520 & \textbf{0.300} & \textbf{0.188} & 0.006 & \textbf{0.364} & 0.401 & 1.88 \\
  & $\checkmark$ & $\checkmark$ & \textbf{Full} & \textbf{0.337} & 0.108 & \textbf{0.854} & \textbf{0.460} & 0.255 & \textbf{0.074} & \textbf{0.112} & \textbf{0.756} & \textbf{0.144} & 0.079 & 0.306 & \textbf{0.371} & \textbf{0.913} & \textbf{0.848} & \textbf{0.538} & \textbf{0.300} & \textbf{0.188} & 0.006 & 0.346 & \textbf{0.404} & \textbf{1.48} \\
\midrule
 \multirow{3}{*}{\rotatebox[origin=c]{90}{\textbf{ACC}}} &  &  & I & \textbf{0.583} & 0.662 & 0.801 & 0.806 & 0.513 & 0.430 & 0.635 & 0.633 & 0.633 & \textbf{0.722} & 0.795 & 0.283 & 0.813 & 0.935 & 0.883 & 0.806 & 0.266 & 0.702 & \textbf{0.515} & 0.519 & 2.60 \\
  & $\checkmark$ &  & II & \textbf{0.583} & \textbf{0.675} & 0.673 & \textbf{0.867} & 0.540 & \textbf{0.470} & 0.651 & \textbf{0.680} & 0.633 & 0.708 & \textbf{0.825} & 0.282 & \textbf{0.953} & 0.939 & 0.887 & \textbf{0.812} & \textbf{0.487} & \textbf{0.801} & 0.427 & 0.737 & 1.93 \\
  & $\checkmark$ & $\checkmark$ & \textbf{Full} & \textbf{0.583} & 0.662 & \textbf{0.822} & \textbf{0.867} & \textbf{0.547} & \textbf{0.470} & \textbf{0.668} & 0.658 & \textbf{0.824} & 0.711 & 0.815 & \textbf{0.294} & \textbf{0.953} & \textbf{0.973} & \textbf{0.897} & \textbf{0.812} & \textbf{0.487} & \textbf{0.801} & 0.437 & \textbf{0.748} & \textbf{1.48} \\
\bottomrule
\end{tabular}
}
\end{table*}

\begin{table*}[!t]
\centering
\caption{Mechanism ablation on ten mixed-type datasets (ACC). The two components indicate whether semantic grounding ($\checkmark$) replaces the default distance for categorical (Cat., default Hamming) and numerical (Num., default Euclidean) attributes. Row~I (neither) reduces to KPR and Full (both) is the complete GRACE. $\overline{AR}$ is the average rank; the best result per dataset is in \textbf{bold}.}
\label{tab:mix_mechanism_ablation}
\renewcommand{\arraystretch}{0.95}
\begin{tabular}{c|cc|c|cccccccccc|c}
\toprule
\multirow{2}{*}{\textbf{Metric}} & \multicolumn{2}{c|}{\scriptsize{\textbf{Components}}} & \multirow{2}{*}{\textbf{Mark}} & \multicolumn{10}{c|}{\textbf{Datasets}} & \multirow{2}{*}{$\overline{AR}$} \\
\cmidrule{2-3}\cmidrule{5-14}
 & \textbf{Cat.} & \textbf{Num.} & & CS & AA & TA & AM & HD & DE & CK & SA & AV & OL & \\
\midrule
\multirow{4}{*}{\rotatebox[origin=c]{90}{\textbf{ACC}}}
 &              &              & I            & 0.562 & 0.784 & 0.383 & 0.561 & 0.576 & 0.363 & 0.669 & 0.562 & 0.660 & 0.324 & 3.60 \\
 &              & $\checkmark$ & II           & 0.599 & 0.663 & 0.348 & 0.646 & 0.754 & 0.377 & 0.915 & 0.801 & 0.595 & 0.383 & 2.90 \\
 & $\checkmark$ &              & III          & \textbf{0.665} & 0.853 & 0.458 & 0.626 & \textbf{0.827} & 0.623 & 0.865 & 0.794 & 0.509 & 0.394 & 2.30 \\
 & $\checkmark$ & $\checkmark$ & \textbf{Full} & 0.662 & \textbf{0.867} & \textbf{0.470} & \textbf{0.670} & 0.815 & \textbf{0.953} & \textbf{0.973} & \textbf{0.812} & \textbf{0.801} & \textbf{0.437} & \textbf{1.20} \\
\bottomrule
\end{tabular}%
\end{table*}

\textbf{Eleven representative clustering counterparts, five LLM-based baselines, and four LLM backends} are selected for comparison.
Partition-based methods include $k$-Modes (KMO)~\cite{huang1998extensions} and $k$-Prototypes (KPR)~\cite{huang1997clustering}.
Distance metric learning methods consist of DiSC~\cite{zhao2025break}, ADC~\cite{zhang2022graph}, HARR~\cite{zhang2025learning}, MCDC~\cite{cai2024robust}, and GUDMM-S~\cite{mousavi2023generalized}.
Tree-based approaches include COForest~\cite{zhao2024learning} and SigDT~\cite{hu2025significance}.
OCL~\cite{zhang2025categorical} adopts ordinalization, and AMPHM~\cite{zhang2025adaptive} employs multi-resolution condensation.
Among these counterparts, KPR, ADC, HARR, AMPHM, and GUDMM-S natively support mixed data.
The five LLM-based baselines adapt representative LLM-for-clustering paradigms to our mixed-attribute setting.
Following the \emph{row-serialization} paradigm of TabLLM~\cite{hegselmann2023tabllm}, we serialize each record into a natural-language string over its attribute names and values, encode it with $f_\theta$, and cluster the resulting embeddings.
Adapting the \emph{triplet-guided representation} idea of ClusterLLM~\cite{zhang2023clusterllm}, we elicit LLM triplet judgments over records to shape the clustering representation.
Following the \emph{few-shot pairwise-constraint} paradigm of~\cite{viswanathan2024large}, we derive LLM-generated must-link/cannot-link constraints and run constrained clustering.
\emph{GenericDescribe} prompts the LLM for a single free-form description per attribute value and clusters the description embeddings, serving as a description-only variant.
Finally, BREVE~\cite{yang2026bridging}, preliminary study on \emph{purely categorical} data, generates per-value descriptions and linearly fuses a semantic view with a one-hot identity view; GRACE extends it to mixed data via LLM-based numerical discretization, dual-view neighborhood consistency, and spectral clustering.
Four LLM backends, namely GPT-5.1, Claude Opus 4.5, DeepSeek V3.2, and Gemini 3 Pro, are considered and evaluated in \hyperref[sec:robustness]{\S\ref{sec:robustness}}.
For sentence encoding, $f_\theta$ is instantiated as all-mpnet-base-v2.

\textbf{Twenty-two public datasets} from the UCI Machine Learning Repository\footnote{\url{https://archive.ics.uci.edu/}}~\cite{dua2017uci} are used for evaluation. Eleven contain purely categorical attributes, and the remaining eleven mix categorical and numerical attributes. \hyperref[tab:datasets]{Table~\ref{tab:datasets}} reports the statistics of each dataset.

\textbf{Three validity indices} are adopted to evaluate clustering quality, namely the Adjusted Rand Index (ARI)~\cite{hubert1985comparing}, Normalized Mutual Information (NMI)~\cite{strehl2002cluster}, and Clustering Accuracy (ACC)~\cite{xu2003document}. Higher values indicate better performance for all three metrics. All experiments are repeated 10 times with different random seeds.

\subsection{Clustering Performance Evaluation}
\label{sec:performance}
\hyperref[tab:pure_cat_results]{Table~\ref{tab:pure_cat_results}} and \hyperref[tab:mixed_results]{Table~\ref{tab:mixed_results}} report the categorical and mixed benchmarks, where GRACE attains the best $\overline{AR}$ on every metric, with a categorical ARI rank of $1.15$, i.e., a first-place finish on nearly every dataset. The size of the lead points to a qualitative gain, since the LLM-grounded semantics recover relations that co-occurrence statistics cannot supply, e.g., the ordering of clinical severities or the kinship among biological traits. No competing method stays on top of both benchmarks, because the strongest categorical specialists do not carry over to mixed inputs, whereas the mixed-capable GUDMM-S slips to mid-pack once symbolic attributes dominate, leaving GRACE and GRACE-A as the only methods placed first and second throughout. GRACE-A trails GRACE by a fraction of a rank and still outranks every competitor, with its rare sizeable drops confined to small datasets such as ZO and SH, where the anchor set draws on too few points to preserve the fine structure.

A potential concern is that prompting the LLM exposes the dataset identity and lets the model retrieve the published labels online, shaping the value descriptions toward them. To rule out such leakage, we relabel two datasets (shown in dark red) with a composite label that a LLM builds from attribute names and value distributions, with no access to the original. The two attributes that define it are held out from every method, and an NMI audit keeps only labels near-independent of the published one (NMI $\le 0.13$ here), which places the new label beyond the reach of memory or a web search. GRACE still ranks first in ARI on both, which a method exploiting retrieved labels could not achieve, i.e., the advantage does not stem from label leakage. Every method scores low on these intrinsically hard targets, and the evidence therefore rests on the relative order, where GRACE leads even the baselines fed the ground-truth label as a feature.

Against methods built specifically for LLM-driven clustering (\hyperref[tab:llm_clustering_ari]{Table~\ref{tab:llm_clustering_ari}}), GRACE and GRACE-A again rank first and second, and the strongest prior design BREVE trails at $2.55$. Since those baselines also query an LLM yet land well behind, the gain traces to the $4P$ description and the cross-view verification, not to LLM access itself.

\begin{table}[!t]
\centering
\caption{ARI performance of representations generated by One-Hot (OH) encoding, HARR, and GRACE under $k$-means, hierarchical clustering with average linkage (HC-avg), and spectral clustering algorithms. ``\up''~indicates that GRACE outperforms both baselines.}
\label{tab:representation_effectiveness}
\resizebox{\columnwidth}{!}{%
\renewcommand{\arraystretch}{0.8}
\begin{tabular}{c|ccl|ccl|ccl}
\toprule
\multirow{2}{*}{\textbf{Dataset}} 
& \multicolumn{3}{c|}{\textbf{$k$-Means}} & \multicolumn{3}{c|}{\textbf{HC-avg}} & \multicolumn{3}{c}{\textbf{Spectral Clustering}} \\
\cmidrule(lr){2-4}\cmidrule(lr){5-7}\cmidrule(lr){8-10}
& OH & HARR & GRACE & OH & HARR & GRACE & OH & HARR & GRACE \\
\midrule
LE & 0.085 & 0.227 & 0.166 & -0.043 & -0.043 & 0.074\up & 0.182 & 0.166 & 0.002 \\
CS & 0.078 & 0.094 & 0.078 & -0.006 & 0.063 & 0.112\up & 0.064 & 0.094 & 0.078 \\
ZO & 0.702 & 0.501 & 0.674 & 0.889 & 0.960 & 0.799 & 0.523 & 0.657 & 0.650 \\
AA & 0.566 & 0.017 & 0.090 & 0.325 & 0.299 & 0.561\up & 0.592 & 0.436 & 0.595\up \\
LY & 0.197 & 0.194 & 0.199\up & 0.111 & 0.144 & 0.154\up & 0.234 & 0.228 & 0.238\up \\
TA & 0.037 & -0.008 & 0.010 & 0.037 & 0.017 & 0.063\up & -0.002 & 0.011 & 0.065\up \\
AM & 0.053 & 0.099 & 0.092 & 0.059 & -0.003 & -0.001 & 0.062 & 0.066 & 0.067\up \\
SO & 0.414 & 0.423 & 0.476\up & 0.343 & 0.303 & 0.522\up & 0.344 & 0.436 & 0.443\up \\
SH & -0.007 & 0.002 & 0.123\up & -0.103 & -0.107 & 0.031\up & -0.061 & -0.063 & 0.118\up \\
BC & -0.003 & -0.002 & -0.010 & 0.236 & 0.016 & 0.145 & 0.124 & 0.185 & 0.149 \\
HD & 0.383 & 0.412 & 0.402 & 0.270 & 0.284 & 0.198 & 0.386 & 0.420 & 0.412 \\
PT & 0.100 & 0.101 & 0.115\up & 0.067 & 0.058 & 0.129\up & 0.102 & 0.097 & 0.107\up \\
DE & 0.798 & 0.882 & 0.940\up & 0.514 & 0.616 & 0.836\up & 0.701 & 0.709 & 0.940\up \\
CK & 0.706 & 0.809 & 0.837\up & 0.070 & 0.702 & 0.880\up & 0.532 & 0.477 & 0.745\up \\
CV & 0.578 & 0.533 & 0.606\up & 0.004 & 0.004 & 0.599\up & 0.564 & 0.585 & 0.571 \\
MA & 0.425 & 0.442 & 0.426 & 0.000 & 0.314 & 0.319\up & 0.426 & 0.419 & 0.432\up \\
CA & 0.058 & 0.013 & 0.209\up & 0.013 & 0.013 & 0.147\up & 0.049 & 0.068 & 0.115\up \\
AV & -0.002 & 0.030 & -0.007 & 0.090 & 0.054 & 0.033 & 0.085 & 0.101 & 0.053 \\
OL & 0.277 & 0.286 & 0.264 & 0.001 & 0.001 & 0.270\up & 0.256 & 0.286 & 0.260 \\
SP & 0.612 & 0.642 & 0.683\up & -0.017 & 0.000 & 0.277\up & 0.311 & 0.436 & 0.405 \\
  MU & 0.207 & 0.166 & 0.215\up & 0.599 & 0.172 & 0.576 & 0.602 & 0.146 & 0.596 \\
  AD & 0.191 & 0.033 & 0.170 & -0.005 & -0.006 & 0.158\up & 0.159 & -0.006 & 0.140 \\
\midrule
 $\overline{AR}$ & 2.30 & 2.00 & \textbf{1.70} & 2.18 & 2.41 & \textbf{1.41} & 2.41 & 2.00 & \textbf{1.59} \\
\bottomrule
\end{tabular}%
}
\end{table}

\subsection{Ablation Study}
\label{sec:ablation}
To isolate each component, \hyperref[tab:ablation]{Table~\ref{tab:ablation}} compares the statistical view alone (I), that view plus semantic representation (SR, giving II), and the full model that adds dual-view neighborhood consistency (DVNC, giving Full). SR is the primary driver, as the $\overline{AR}$ falls from roughly $2.6$ to $1.9$ on all three metrics. DVNC then contributes a further drop to near $1.5$ by cross-checking the semantic affinities against statistical evidence. The benefit is clearest on SH and ZO. On SH, SR alone adds nothing, yet the full model lifts ARI from $0.068$ to $0.294$ once cross-view confirmation removes unreliable semantic neighbors. On ZO, SR even lowers the score, whereas the complete model climbs back above the baseline because the statistical view down-weights the misleading semantic affinities. DVNC therefore serves a dual purpose, reinforcing the correct semantic signals and repairing the harmful ones.

A further ablation in \hyperref[tab:mix_mechanism_ablation]{Table~\ref{tab:mix_mechanism_ablation}} turns to mixed data, where the baseline (I) reduces to k-prototype with Hamming for categorical and Euclidean for numerical attributes. Grounding the numerical distance alone (II) or the categorical distance alone (III) already improves on I, which shows the semantic metric suits either attribute type. Applying it to both (Full) wins by the widest margin, and the average rank drops steadily from $3.60$ at I to $1.20$ at Full across the four settings. The monotonic trend traces the gain to a single cause, i.e., the two attribute types come to share one semantic metric instead of the separate distances that k-prototype fuses by hand, which unifies them in a common metric space. Finally, three additional ablations on the four prompt perspectives, the dual-channel design, and the graph-construction and spectral-clustering steps appear in the \href{https://github.com/develop-yang/GRACE-GRACE-A}{\underline{supplementary material}}.

\subsection{General-Purpose Representation}
\label{sec:representation}
To test whether the GRACE representation generalizes beyond the spectral setting, \hyperref[tab:representation_effectiveness]{Table~\ref{tab:representation_effectiveness}} pairs three encodings, namely one-hot (OH), HARR, and GRACE, with $k$-means, hierarchical clustering under average linkage (HC-avg), and spectral clustering. The three representations differ in the information they carry, since OH treats every value as equidistant, HARR computes statistical distances from conditional distributions, and GRACE grounds the value relations in external semantic knowledge. GRACE holds the best $\overline{AR}$ under all three algorithms, whereas OH and HARR trade the runner-up position from one to the next, which shows that only the added world knowledge delivers a reliable gain. The margin widens most under HC-avg, because greedy agglomeration runs no iterative refinement and therefore exposes the quality of the input distances most directly. GRACE thus serves as a general-purpose representation that transfers across clustering paradigms, not a component tuned to the spectral case alone.

\subsection{Robustness to LLM Selection}
\label{sec:robustness}
Robustness to the LLM backend is examined by replacing the default GPT with Claude, DeepSeek, and Gemini, recording the ARI deviation on every benchmark (\hyperref[tab:llm_robustness_ari]{Table~\ref{tab:llm_robustness_ari}}). For GRACE, $|\Delta|$ stays within $0.02$ on the vast majority of datasets, and the Friedman test returns $p{=}0.41$, i.e., switching the model leaves performance statistically unchanged. The consistency follows from the design of the $4P$ prompt, because it queries established factual knowledge that LLMs trained on overlapping corpora encode in much the same way, and the three alternatives therefore produce nearly identical value descriptions. GRACE-A shows larger swings on a few datasets, e.g., a drop on CA under Gemini, because the anchor approximation magnifies small representation differences. Those swings favor no particular backend, however, and the test for the approximate variant returns an even higher $p{=}0.96$. Therefore, the effectiveness of GRACE rests on the description design and the cross-view verification, not on any single LLM.

\begin{table}[!t]
\centering
\caption{LLM robustness evaluation (ARI) on 22 benchmarks. The GPT column reports absolute performance, and each $\Delta$ column gives the deviation of an alternative backend from the GPT baseline, where Cla., DS, and Gem.\ denote Claude, DeepSeek, and Gemini. The Friedman $p$ row reports a Friedman test assessing whether the choice of backend significantly affects performance, with n.s.\ denoting a non-significant result ($p \ge 0.05$).}
\label{tab:llm_robustness_ari}
\renewcommand{\arraystretch}{0.8}
\setlength{\tabcolsep}{3pt}
\resizebox{\columnwidth}{!}{%
\begin{tabular}{c|crrr|crrr}
\toprule
\multirow{2}{*}{\textbf{Dataset}}
& \multicolumn{4}{c|}{\textbf{GRACE}} & \multicolumn{4}{c}{\textbf{GRACE-A}} \\
\cmidrule(lr){2-5}\cmidrule(lr){6-9}
& \textbf{GPT} & $\Delta$\textbf{Cla.} & $\Delta$\textbf{DS} & $\Delta$\textbf{Gem.}
& \textbf{GPT} & $\Delta$\textbf{Cla.} & $\Delta$\textbf{DS} & $\Delta$\textbf{Gem.} \\
\midrule
LE & 0.164 & 0.00 & 0.00 & 0.00 & 0.201 & 0.00 & 0.00 & 0.00 \\
CS & 0.094 & \textcolor{pos}{+0.02} & \textcolor{pos}{+0.05} & \textcolor{pos}{+0.05} & 0.094 & 0.00 & 0.00 & 0.00 \\
ZO & 0.827 & 0.00 & 0.00 & 0.00 & 0.617 & \textcolor{neg}{-0.01} & 0.00 & \textcolor{neg}{-0.01} \\
AA & 0.529 & 0.00 & \textcolor{pos}{+0.01} & 0.00 & 0.608 & \textcolor{pos}{+0.01} & \textcolor{pos}{+0.01} & \textcolor{pos}{+0.01} \\
LY & 0.194 & \textcolor{pos}{+0.02} & \textcolor{neg}{-0.01} & \textcolor{pos}{+0.01} & 0.148 & \textcolor{pos}{+0.02} & 0.00 & \textcolor{pos}{+0.01} \\
TA & 0.066 & 0.00 & 0.00 & \textcolor{neg}{-0.01} & 0.060 & \textcolor{neg}{-0.01} & 0.00 & \textcolor{neg}{-0.01} \\
AM & 0.093 & \textcolor{pos}{+0.02} & \textcolor{neg}{-0.01} & \textcolor{neg}{-0.01} & 0.086 & \textcolor{neg}{-0.01} & 0.00 & \textcolor{neg}{-0.01} \\
SO & 0.456 & 0.00 & 0.00 & 0.00 & 0.444 & \textcolor{pos}{+0.01} & \textcolor{neg}{-0.01} & \textcolor{pos}{+0.02} \\
SH & 0.294 & 0.00 & 0.00 & 0.00 & 0.134 & 0.00 & 0.00 & 0.00 \\
BC & 0.145 & \textcolor{pos}{+0.01} & \textcolor{pos}{+0.01} & 0.00 & 0.155 & \textcolor{pos}{+0.01} & \textcolor{pos}{+0.01} & 0.00 \\
HD & 0.409 & \textcolor{neg}{-0.01} & \textcolor{neg}{-0.02} & 0.00 & 0.417 & \textcolor{neg}{-0.02} & \textcolor{neg}{-0.04} & 0.00 \\
PT & 0.114 & 0.00 & 0.00 & 0.00 & 0.116 & \textcolor{neg}{-0.01} & 0.00 & 0.00 \\
DE & 0.895 & \textcolor{pos}{+0.03} & 0.00 & \textcolor{pos}{+0.01} & 0.887 & \textcolor{pos}{+0.02} & 0.00 & \textcolor{pos}{+0.01} \\
CK & 0.896 & 0.00 & \textcolor{pos}{+0.01} & \textcolor{neg}{-0.02} & 0.814 & 0.00 & 0.00 & 0.00 \\
CV & 0.628 & 0.00 & 0.00 & 0.00 & 0.580 & 0.00 & 0.00 & 0.00 \\
MA & 0.402 & 0.00 & \textcolor{neg}{-0.02} & \textcolor{neg}{-0.01} & 0.403 & \textcolor{neg}{-0.01} & \textcolor{neg}{-0.03} & 0.00 \\
CA & 0.131 & 0.00 & \textcolor{pos}{+0.04} & \textcolor{pos}{+0.01} & 0.092 & \textcolor{pos}{+0.01} & \textcolor{pos}{+0.04} & \textcolor{neg}{-0.08} \\
AV & 0.053 & 0.00 & 0.00 & 0.00 & -0.002 & 0.00 & 0.00 & 0.00 \\
OL & 0.255 & 0.00 & \textcolor{pos}{+0.02} & \textcolor{neg}{-0.01} & 0.250 & \textcolor{pos}{+0.03} & \textcolor{pos}{+0.03} & \textcolor{pos}{+0.02} \\
SP & 0.396 & 0.00 & \textcolor{neg}{-0.03} & \textcolor{neg}{-0.01} & 0.366 & 0.00 & \textcolor{neg}{-0.02} & \textcolor{neg}{-0.01} \\
\midrule
\textbf{Friedman $p$}
 & \multicolumn{4}{c|}{$0.41$ (n.s.)}
 & \multicolumn{4}{c}{$0.96$ (n.s.)} \\
\bottomrule
\end{tabular}%
}
\end{table}

\subsection{Scalability Evaluation}
\label{sec:scalability}
Scalability is evaluated on synthetic data generated from the Dermatology dataset by sampling from its per-class conditional distributions, with the sample-size experiment varying $n$ from 10{,}000 to 100{,}000 in steps of 10{,}000 at $s{=}20$ attributes and $k{=}5$ clusters (\hyperref[fig:scalability]{Figure~\ref{fig:scalability}}). As an exact spectral method, GRACE inherits an $O(n^2)$ memory cost and gives out beyond 20{,}000, later than dense baselines such as AMPHM that fail earlier. GRACE-A removes the ceiling and reaches 100{,}000 with near-linear scaling, a scale no competing method attains without sacrificing clustering quality. In fact, the approximate variant attains the highest ARI throughout and edges slightly above GRACE at the two overlapping scales, because the anchor sampling acts as an implicit regularizer on the well-separated synthetic clusters. The supporting analysis is deferred to the \href{https://github.com/develop-yang/GRACE-GRACE-A}{\underline{supplementary material}}.
The cardinality and cluster-count experiments fix $n{=}2{,}000$ and $s{=}20$, sweeping $V$ from 100 to 1{,}000 (at $k{=}5$) and $k$ from 20 to 200, respectively. Several counterparts show steep time growth or memory failures as either quantity rises, and the full model itself climbs steadily with the value count, whereas GRACE-A stays nearly flat across both sweeps. Taken together, the approximate variant offers a practical drop-in replacement once the quadratic cost of GRACE turns prohibitive.
\begin{figure}[t]
    \centering
    \includegraphics[width=\columnwidth]{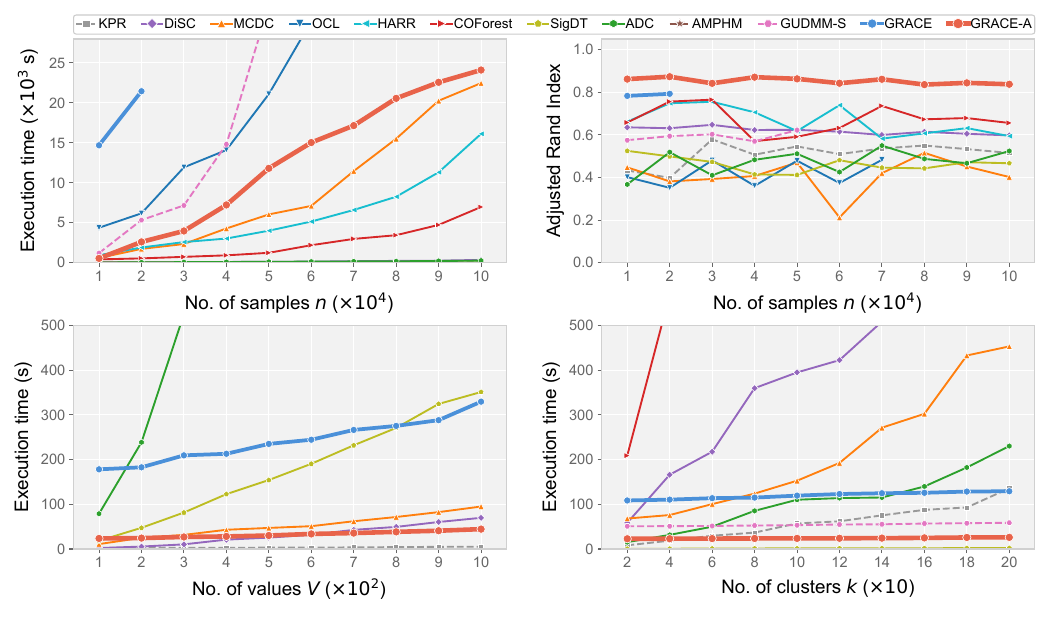}
    \Description{}
    \caption{Scalability evaluation on synthetic data. The sample-size experiment varies $n$ from 10{,}000 to 100{,}000, reporting execution time and ARI. The cardinality experiment varies $V$ from 100 to 1{,}000. The cluster-count experiment varies $k$ from 20 to 200. Missing points indicate out-of-memory failures.}
    \label{fig:scalability}
\end{figure}

\subsection{Case Study on CK Dataset}
\label{sec:case}

\begin{figure}[t]
    \centering
    \includegraphics[width=\columnwidth]{Figure/heapmap.pdf}
    \Description{Case study.}
    \caption{Case study on the serum creatinine attribute of the CK dataset, where GRACE is our method and HARR is the strongest statistics-only baseline. The top row reports the value-level affinity among the five severity levels, (a) under HARR and (b) under GRACE. The bottom row shows the UMAP of the sample representations, (c) under HARR and (d) under GRACE, colored by cluster assignment.}
    \label{fig:case}
\end{figure}

Among the categorical attributes considered so far, ordinal ones additionally carry an inherent order over their values, which makes them a clean test of whether a method's affinity reflects genuine structure (\hyperref[fig:case]{Figure~\ref{fig:case}}). The serum creatinine attribute of the CK (Chronic Kidney Disease) dataset offers one such case, with five levels running from normal to critical. Because GRACE grounds every value in external world knowledge before consulting dataset statistics, it encodes each level into a description representation, and the cosine similarity between two such representations defines the affinity reported in Panel~(b). The grounded affinity therefore tracks clinical meaning, falling off as severity distance grows, e.g., severe remains near critical at 0.76 yet drops to 0.61 against normal, and the five levels settle into a smooth gradient along the scale. Reading the recovered structure then uncovers an ordinal arrangement that GRACE was never instructed to produce, i.e., the gradient emerges as a by-product of reasoning over meaning instead of counting co-occurrences. HARR, by comparison, derives its affinity from dataset statistics alone, and Panel~(a) reveals the resulting distortion. It assigns mild and moderate an identical score of 1.00 and pushes normal away from every abnormal level, thereby flattening the fine-grained progression that separates one clinical stage from the next. The contrast at the value level in turn carries over to the sample level without further intervention. The faithful gradient of GRACE divides the patients into two compact clusters in Panel~(d), whereas the collapsed affinity of HARR leaves the same patients interleaved in Panel~(c). Overall, semantic grounding enables GRACE to recover the genuine structure of an ordinal attribute, and the correct linear order follows as a natural product of that grounding and not as a target built into the model.

\section{Concluding Remarks}
In modern data engineering, clustering mixed tabular data serves as a fundamental technique for the label-free organization, cohort discovery, and exploratory analysis of heterogeneous records. To advance this critical capability, this paper presents GRACE, a scalable framework that practically injects LLM-derived world knowledge into the clustering pipeline. To circumvent the prohibitive token and temporal overheads inherent to iterative LLM queries, GRACE shifts semantic acquisition to the attribute-value level. This one-shot grounding extracts reusable semantic anchors, breaking the scalability bottleneck of LLM-assisted metric learning. Furthermore, to guarantee the faithfulness of the external knowledge and prevent semantic hallucinations from distorting actual data groupings, GRACE strictly cross-validates these conceptual affinities against dataset-internal statistical evidence. This dual-view calibration yields a robust metric space, ensuring the resulting clusters achieve superior accuracy alongside conceptual interpretability. Building upon this architecture, we further introduce GRACE-A, an approximation-based variant that reduces the computational complexity to linear time. GRACE-A delivers an ultra-efficient yet precise solution, ensuring that LLM-enhanced metric learning remains computationally feasible even for massive-scale datasets.

Despite the above-mentioned merits, the current one-shot LLM grounding mechanism inherently assumes an offline batch setting, requiring the complete attribute value space to be fully observable before metric construction. Given the increasingly frequent updates in modern databases, extending our framework to efficiently accommodate emerging distributional shifts of mixed data on the fly presents an important avenue for future research. 

\bibliographystyle{IEEEtran}
\bibliography{References}

\end{document}